\numberwithin{equation}{section}
\newtheorem{remark}{Remark}[section]
\newtheorem{theorem}{Theorem}[section]
\newtheorem{definition}{Definition}[section]
\newtheorem{proposition}{Proposition}[section]
\def \E{\mathbb{E}}
\def \N{\mathbb{N}}
\def \P{\mathbb{P}}
\def \R{\mathbb{R}}
\def \Pc{\mathcal{P}}
\newcommand{\mud}{\mu_{\operatorname{d}}}
\DeclareMathOperator{\Lip}{Lip}
\icmltitlerunning{A Differential Equation Approach for WGANs}
\begin{document}

\twocolumn[
\icmltitle{A Differential Equation Approach for Wasserstein GANs and Beyond}

\icmlsetsymbol{equal}{*}

\begin{icmlauthorlist}
\icmlauthor{Zachariah Malik}{yyy}
\icmlauthor{Yu-Jui Huang}{yyy}
\end{icmlauthorlist}

\icmlaffiliation{yyy}{Department of Applied Mathematics, University of Colorado, Boulder, CO 80309, United States}

\icmlcorrespondingauthor{Zachariah Malik}{zachariah.malik@colorado.edu}
\icmlcorrespondingauthor{Yu-Jui Huang}{yujui.huang@colorado.edu}

\vskip 0.3in
]

\printAffiliationsAndNotice{}

\begin{abstract}
This paper proposes a new theoretical lens to view Wasserstein generative adversarial networks (WGANs). To minimize the Wasserstein-1 distance between the true data distribution and our estimate of it, we derive a distribution-dependent ordinary differential equation (ODE) which represents the gradient flow of the Wasserstein-1 loss, and show that a forward Euler discretization of the ODE converges. This inspires a new class of generative models that naturally integrates persistent training (which we call W1-FE). When persistent training is turned off, we prove that W1-FE reduces to WGAN. When we intensify persistent training, W1-FE is shown to outperform WGAN in training experiments from low to high dimensions, in terms of both convergence speed and training results. Intriguingly, one can reap the benefits only when persistent training is carefully integrated through our ODE perspective. As demonstrated numerically, a naive inclusion of persistent training in WGAN (without relying on our ODE framework) can significantly worsen training results.
\end{abstract}

\section{Introduction}
Recently, \cite{Huang23} have shown that the original generative adversarial network (GAN) algorithm \citep{Goodfellow14} follows the dynamics of an ordinary differential equation (ODE) which represents the gradient flow induced by the Jensen-Shannon divergence \citep[Proposition 8]{Huang23}. \citet{Huang24} characterize the gradient flow induced by the Wasserstein-$2$ distance and propose a class of generative models, called W2-FE, that follows the dynamics of the corresponding ODE \citep[Algorithm 1]{Huang24}. Notably, while W2-FE covers W2-GAN in \citet{Leygonie19} as a special case, it allows for a modification to the generator training that may lead to substantial numerical improvements over W2-GAN \citep[Section 7]{Huang24}.

This paper is motivated by the question: can we obtain analogous results for Wasserstein generative adversarial networks (WGANs)? As a mainstream class of GAN algorithms known for its enhanced stability, WGAN \citep{Arjovsky17, Gulrajani17, Petzka18} attempts to generate samples whose distribution is similar to the true data distribution (measured by the Wasserstein-1 distance) using adversarial training motivated by \cite{Goodfellow14}. It is natural to expect that by slightly modifying the arguments in \citet{Huang24}, 
one can similarly derive an ODE that represents the gradient flow induced by the Wasserstein-1 distance and correspondingly propose a new class of generative models that covers and potentially improves WGAN. 

Mathematical challenges abound, however, under the Wasserstein-1 distance. First and foremost, while subdifferential calculus is full-fledged under the Wasserstein-$p$ distance for all $p>1$ \citep[Chapter 10]{Ambrosio08}, the same construction breaks down when $p = 1$. In particular, the ``Fr\'{e}chet differential'' \citep[Corollary 10.2.7]{Ambrosio08} and the ``Wasserstein gradient'' \citep[Definition 5.62]{CD-book-18-I} are no longer well-defined for $p=1$. It is not even clear how ``gradient'' should be defined under the Wasserstein-$1$ distance. Moreover, when showing that a discretization of the gradient-flow ODE converges (as the time step tends to 0), \citet{Huang24} rely on an interpolation result from optimal transport: the most cost-efficient path to move one probability measure to another (measured by the Wasserstein-$p$ distance for any $p>1$) can be interpolated recursively into smaller segments between intermediate measures \citep[Lemma 7.2.1]{Ambrosio08}, where each segment is most cost-efficient in itself and represents one time step in the ODE discretization. Such an interpolation result, again, fails when $p=1$. 

To overcome these challenges, we first recall the ``linear functional derivative'' from the mean field game literature \citep{CD-book-18-I} and observe that the ``Euclidean gradient of a linear functional derivative'' can generally serve as the proper gradient notion in the space of probability measures, independently of any subdifferential calculus; see the discussion below Definition~\ref{def:LFD}. Our first main theoretical result, Proposition~\ref{Prop: LFD of W1 distance}, shows that the linear function derivative of the Wasserstein-1 distance exists and coincides with the Kantorovich potential in optimal transport (i.e., a 1-Lipschitz function that maximizes the duality formula of the Wasserstein-1 distance; see Definition~\ref{def:KP}). The gradient flow induced by the Wasserstein-1 distance thus takes the form of an ODE that evolves along the negative Euclidean gradient of the Kantorovich potential (i.e., \eqref{Eq: ODE} below). A  forward Euler discretization can be correspondingly devised (i.e., \eqref{Discrete Time W1 Transport Process} below). As mentioned above, the interpolation result in \citet{Ambrosio08} can no longer be used to show convergence of the discretization.  Our second main theoretical result, Theorem~\ref{Th: Convergence of interpolating measure curve}, instead relies on the uniform boundedness of the Euclidean gradient of the Kantorovich potential (see Remark~\ref{rem:nabla phi}), which is unique to the $p=1$ case. Such boundedness allows us to show appropriate compactness and equicontinuity of the flow of measures induced by the discretization, so that a refined Arzela-Ascoli argument can be applied to give the convergence of the discretization; see the discussion below Theorem~\ref{Th: Convergence of interpolating measure curve}. 

Algorithm~\ref{Alg: W1-FE} (called {W1-FE}) is designed to simulate the discretization of the gradient-flow ODE. It first computes an estimate of the Kantorovich potential by following the discriminator training in a typical WGAN algorithm. With the Kantorovich potential estimated, a generator is trained to move along the ODE discretization. As the generator's task is to learn the new distribution at the next time point, {\it persistent training} techniques \citep{Fischetti18} can be naturally incorporated---namely, given a set of samples from the new distribution, the generator descends the loss $K\in\N$ consecutive times to better represent these samples (and thus the new distribution). For the case $K=1$ (which means no persistent training), Proposition~\ref{Prop: W1-GANs simulate our process} shows that W1-FE reduces to WGAN. Despite the coincidence under $K=1$, W1-FE and WGAN are fundamentally different. Specifically, if we also incorporate persistent training into WGAN, the generator updates in W1-FE and WGAN can be quite different under any $K>1$. That is, $K=1$ is the only case where they agree; see the discussion below Proposition~\ref{Prop: W1-GANs simulate our process} and Remark~\ref{rem:W1-FE and WGAN} for details. In fact, adding persistent training to WGAN may destabilize the training process and worsen the training results. This is in contrast to what we observe in W1-FE, to be introduced below. 

We train W1-FE with diverse persistency levels $K=1,3,5,10$ in three experiments that involve datasets from low to high dimensions, including synthetic two-dimensional mixtures of Gaussians and real datasets of USPS, MNIST, and CIFAR-10. Across the three experiments, W1-FE with $K>1$ converges significantly faster and achieves better training results than the baseline $K=1$ case (which is WGAN). Intriguingly, while a larger $K$ generally implies faster convergence, we observe numerically a threshold of $K$ beyond which the training results start to deteriorate, possibly due to overfitting. This suggests that taking $K$ to be at that threshold can likely best balance the benefits of persistent training against potential overfitting; see the last paragraph in Section~\ref{sec:limitations} for details.  

Many recently developed generative models also feature ``Wasserstein gradient flows'' (e.g., \citet{Fan22}, \citet{Choi24}, \citet{Zhang23}, \citet{Onken21}). The common theme in these works is to minimize a loss function under the geometry induced by the Wasserstein-$2$ distance. This means that ``Wasserstein gradient flows'' in the literature should be more precisely referred to as ``Wasserstein-2 gradient flows.'' Since subdifferential calculus under the Wasserstein-$2$ distance and the resulting gradient flows are widely studied \citep[Chapter 10 and Section 11.2]{Ambrosio08}, much of the recent literature immediately leverages these results to design new generative models. Our study is distinct from all this, as we immediately tackle Wasserstein-1 gradient flows, which are far less understood. As explained before, this focus on Wasserstein-1 gradient flows allows us to recover and even improve WGAN, which by construction minimizes the Wasserstein-1 distance and cannot be easily analyzed by the standard Wasserstein-2 framework.

The rest of this paper is organized as follows. Section~\ref{sec:preliminaries} introduces the mathematical framework and notation to be used. Section~\ref{sec:PF} discusses how the ``gradient descent'' idea can be applied to minimizing the Wasserstein-1 distance and formulates the corresponding gradient-flow ODE. Section~\ref{sec:discretization} defines a forward Euler discretization of the ODE and designs an algorithm (i.e., W1-FE) to simulate this discretization. Section~\ref{sec:examples} trains W1-FE in three experiments and demonstrates its superiority over WGAN. Section~\ref{sec:limitations} discusses the limitations of our study on both the theoretical and numerical sides. Section~\ref{sec:conclusion} concludes our findings. 


\section{Mathematical Preliminaries}\label{sec:preliminaries}
Fix $d\in \N$ and let $\mathcal L^d$ be the Lebesgue measure on $\R^d$. Let $\mathcal P(\R^d)$ be the set of probability measures on $\R^d$ and $\mathcal{P}_{p}(\R^d)$, for $p\ge 1$, be the set of elements in $\mathcal{P}(\R^d)$ with finite $p^{th}$ moments, i.e., 
\[
 \mathcal{P}_{p}(\R^d) := \bigg\{ \mu \in \mathcal{P}(\R^d) : \int_{\R^d} |y|^p d\mu(y) < \infty \bigg\}.
\]
The Wasserstein-$p$ distance, a metric on $\Pc_p(\R^d)$, is defined by
\begin{equation*} \label{Eq: Kantorovich Relaxation}
    \begin{aligned}
        W_{p}(\mu, \nu) := &\left( \inf_{\gamma \in \Gamma(\mu,\nu)} \int_{\R^d \times \R^d} |x - y|^p \, d\gamma(x, y)\right)^{1/p}
        \\
        &\quad \text{for all} \; \mu,\nu\in \mathcal{P}_{p}(\R^d),
    \end{aligned}
\end{equation*}
where $\Gamma(\mu,\nu)$ is the set of all probability measures on $\R^d\times\R^d$ whose marginals on the first and second coordinates are $\mu$ and $\nu$, respectively \citep[Definition 6.1]{Villani09}.  For $p=1$, we recall the Kantorovich-Rubinstein duality formula for the $W_{1}$ distance \citep[(5.11)]{Villani09}, i.e., 
\begin{eqnarray} \label{Eq: W1 Duality Formula}
    W_{1}(\mu, \nu) = \sup_{\substack{\|\varphi\|_{\operatorname{Lip}} \leq 1}} \left \{ \int_{\R^d} \varphi \, d(\mu - \nu) \right \},
\end{eqnarray}
where ``$||\varphi||_{\Lip}\le 1$'' denotes the set of all $\varphi:\R^d\to\R$ that are 1-Lipschitz functions. 

\begin{definition}\label{def:KP}
A $1$-Lipschitz $\varphi:\R^d\to\R$ that maximizes \eqref{Eq: W1 Duality Formula} is called a Kantorovich potential from $\mu$ to $\nu$ and will be denoted by $\varphi_\mu^\nu$. 
\end{definition}
The general definition of a (maximal) Kantorovich potential is well-known for any $p\ge1$; see the remark above \citet[Theorem 6.15]{Ambrosio08}. For $p=1$, it reduces specifically to Definition~\ref{def:KP}, thanks to the discussion in \citet[Particular Case 5.4]{Villani09}. 

\begin{remark}\label{rem:nabla phi}
For any $\mu,\nu\in\Pc_1(\R^d)$, by \citet[Theorem 5.10 (iii)]{Villani09}, a Kantorivich potential $\varphi_\mu^\nu$ generally exists. As $\varphi_\mu^\nu$ is $1$-Lipschitz, $\nabla \varphi_\mu^\nu(x)$ exists with $|\nabla \varphi_\mu^\nu(x)|\le 1$ for $\mathcal L^d$-a.e.\  $x\in\R^d$.   
 \end{remark}


\section{Problem Formulation}\label{sec:PF}
Let $\mud\in\Pc_1(\R^d)$ denote the (unknown) data distribution. Starting with an arbitrary initial estimate $\mu_0\in\Pc_1(\R^d)$ of  $\mud$, we aim to improve our estimate progressively and ultimately solve the problem
\begin{equation}\label{to solve}
\min_{\mu\in\Pc_1(\R^d)} W_1(\mu,\mud). 
\end{equation}
As it can be checked directly that $\mu\mapsto W_1(\mu,\mud)$ is convex on $\Pc_1(\R^d)$ (Appendix~\ref{subsec:convexity of J}), it is natural to ask if \eqref{to solve} can be solved by gradient descent, the traditional wisdom of convex minimization. Recall that for a convex $f :\R^d\to \R$, if its minimizer $y^*\in\R^d$ exists, it can be found by gradient descent in $\R^d$. Specifically, for any initial point $y\in\R^d$, the ODE 
\begin{equation}\label{classical ODE}
d Y_t = -\nabla f(Y_t) dt,\quad Y_0=y\in\R^d
\end{equation}
converges to $y^*$ as $t\to\infty$. For the derivation of a similar gradient-descent ODE for \eqref{to solve}, where the minimizer is clearly $\mud\in\Pc_1(\R^d)$, the crucial question is how the ``gradient'' of the function
\begin{equation}\label{J}
J(\mu) := W_{1}(\mu, \mud),\quad \mu\in\Pc_1(\R^d)
\end{equation}
should be defined. As mentioned in the introduction, subdifferential calculus is well-developed in $\Pc_p(\R^d)$ for all $p>1$ \citep[Chapter 10]{Ambrosio08}, but the same construction breaks down exactly when $p=1$. As a result, neither ``Fr\'{e}chet differential'' \citep[Corollary 10.2.7]{Ambrosio08} nor the equivalent ``Wasserstein gradient'' \citep[Definition 5.62]{CD-book-18-I} are well-defined in $\Pc_1(\R^d)$. 
To circumvent this, let us first recall 
``linear functional derivative'' from the mean field game literature. 

\begin{definition}\label{def:LFD}
Let $\mathcal S\subseteq \Pc(\R^d)$ be convex. A linear functional derivative of $U: \mathcal S \rightarrow \mathbb{R}$ is a function $\frac{\delta U}{\delta m}:\mathcal S\times\R^d\to\R^d$ that satisfies
    \begin{equation} \label{Cond: LFD}
        \begin{aligned}
            \lim_{\epsilon \rightarrow 0^{+}} &\frac{U(\mu + \epsilon(\nu - \mu)) - U(\mu)}{\epsilon} = \\
            &\int_{\R^d} \frac{\delta U}{\delta m}(\mu,y) \, d(\nu - \mu)(y),\quad \forall \mu, \nu \in \mathcal S. 
        \end{aligned}
    \end{equation}
\end{definition}
The above definition is in line with \citet[Definition 2.1]{Jourdain21}, where $\mathcal S = \Pc_p(\R^d)$ for $p\ge 1$, and \citet[Definition 5.43]{CD-book-18-I}, where $\mathcal S=\Pc_2(\R^d)$. Note that ``${\delta U}/{\delta m}$'' is simply a common notation for a function satisfying \eqref{Cond: LFD}, where ``$m$'' indicates that the variable in discussion is a probability measure and ``${\delta}/{\delta m}$'' alludes to a kind of differentiation with respect to $m$. 

The key observation here is that the ``Euclidean gradient of a linear functional derivative,'' i.e., $\nabla\frac{\delta U}{\delta m}(\mu,\cdot):\R^d\to\R^d$, can generally serve as the ``gradient of $U$ at a measure $\mu$.'' For $\mathcal S=\Pc_2(\R^d)$, \citet[Proposition 5.48 and Theorem 5.64]{CD-book-18-I} show that $\nabla\frac{\delta U}{\delta m}(\mu,\cdot)$ in fact coincides with the Wasserstein gradient of $U$ at $\mu\in\Pc_2(\R^d)$. 
For $\mathcal S=\Pc^r(\R^d) :=\{\mu\in\Pc(\R^d):\mu\ll \mathcal L^d, \frac{d\mu}{d\mathcal L^d}\in C^1(\R^d)\}$, where Wasserstein gradients are not well-defined, \citet{Huang23} show that $\nabla \frac{\delta U}{\delta m}(\mu, \cdot)$ still fulfills a gradient-type property. Specifically, for any $\mu\in\Pc^r(\R^d)$ and $\xi:\R^d\to\R^d$, let $\mu^\xi_\epsilon$ be the law of $Y+\epsilon \xi(Y)$, where $Y$ is a random variable whose law is $\mu$. For sufficiently smooth and compactly supported $\xi$,  \citet[Proposition 5]{Huang23} shows that $\mu^\xi_\epsilon\in\Pc^r(\R^d)$ and
\[
\lim_{\epsilon\to 0^+} \frac{U(\mu^\xi_\epsilon)-U(\mu)}{\epsilon} = \int_{\R^d} \nabla \frac{\delta U}{\delta m}(\mu,y) \cdot \xi(y) d\mu(y),
\]
provided that $\frac{\delta U}{\delta m}$ is locally integrable and sufficiently continuous. That is, for any $y\in\R^d$, $\nabla \frac{\delta U}{\delta m}(\mu,y)$ specifies how moving along $\xi(y)$ instantaneously changes the function value from $U(\mu)$, which suggests that $\nabla \frac{\delta U}{\delta m}(\mu,\cdot):\R^d\to\R^d$ should be the proper ``gradient of $U$ at $\mu\in\Pc^r(\R^d)$.''

In view of this, in our case of $\mathcal S=\Pc_1(\R^d)$, where Wasserstein gradients are again not well-defined, we take the ``gradient of $J$ in \eqref{J} at $\mu\in\Pc_1(\R^d)$'' to be $\nabla\frac{\delta J}{\delta m}(\mu, \cdot):\R^d\to\R^d$. The resulting gradient-descent ODE for \eqref{to solve}, in analogy to \eqref{classical ODE}, is then 
\begin{equation}\label{first Y}
    dY_{t} = -\nabla \frac{\delta J}{\delta m}(\mu^{Y_{t}}, Y_{t})\, dt, \quad \mu^{Y_{0}} = \mu_{0} \in \Pc_{1}(\R^d).
\end{equation}
This ODE, intriguingly, is {\it distribution-dependent}. At time 0, $Y_0$ is an $\R^d$-valued random variable whose law is $\mu_0\in\Pc_1(\R^d)$, an arbitrarily specified initial distribution. This initial randomness trickles through the ODE dynamics in \eqref{first Y}, such that $Y_t$ remains an $\R^d$-valued random variable, with its law denoted by $\mu^{Y_t}\in\Pc_1(\R^d)$, at every $t>0$. The evolution of the ODE is then determined jointly by the ``gradient of $J$'' at the present distribution $\mu^{Y_t}\in\Pc_1(\R^d)$ (i.e., 
the function $\nabla \frac{\delta J}{\delta m}(\mu^{Y_{t}}, \cdot)$) and the actual realization of $Y_t$ (which is plugged into $\nabla \frac{\delta J}{\delta m}(\mu^{Y_{t}}, \cdot)$).

To ensure that ODE \eqref{first Y} is tractable enough, one needs to show that $\frac{\delta J}{\delta m}$ exists and admits a concrete characterization. Our first main theoretic result serves this purpose. 

\begin{proposition}\label{Prop: LFD of W1 distance}
For any $\mu\in\Pc_1(\R^d)$, a Kantorovich potential $ \varphi_{\mu}^{\mud}$ (Definition~\ref{def:KP}) is a linear functional derivative of $J : \Pc_{1}(\R^d) \rightarrow \mathbb{R}$ in \eqref{J} at $\mu \in \Pc_{1}(\R^d)$ (Definition~\ref{def:LFD} with $\mathcal S=\Pc_1(\R^d)$). Specifically, for any $\mu\in\Pc_1(\R^d)$, 
    \begin{equation*}
        \frac{\delta J}{\delta m}(\mu, y) = \varphi_{\mu}^{\mud}(y)\quad \forall y\in\R^d. 
    \end{equation*}
\end{proposition}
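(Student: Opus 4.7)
The plan is to use Kantorovich-Rubinstein duality \eqref{Eq: W1 Duality Formula} to sandwich the difference quotient of $J$, then pass to the limit via an Arzelà-Ascoli argument on Kantorovich potentials.

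For $\epsilon \in (0,1]$, set $\mu_\epsilon := (1-\epsilon)\mu + \epsilon\nu \in \Pc_1(\R^d)$ and let $\varphi_\epsilon$ denote a Kantorovich potential from $\mu_\epsilon$ to $\mud$ (which exists by Remark~\ref{rem:nabla phi}). Using $\varphi_\mu^{\mud}$ as a (suboptimal) test function in the dual formula for $W_1(\mu_\epsilon, \mud)$ and $\varphi_\epsilon$ as a (suboptimal) test function in that for $W_1(\mu, \mud)$, together with the identity $\mu_\epsilon - \mu = \epsilon(\nu - \mu)$, immediately yields the two-sided bound
\[
\int_{\R^d} \varphi_\mu^{\mud} \, d(\nu - \mu) \;\le\; \frac{J(\mu_\epsilon) - J(\mu)}{\epsilon} \;\le\; \int_{\R^d} \varphi_\epsilon \, d(\nu - \mu).
\]
It remains to show that the upper bound converges to the lower bound as $\epsilon \to 0^+$.

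Normalize $\varphi_\epsilon$ so that $\varphi_\epsilon(0) = 0$; this does not affect any of the integrals above since $(\nu - \mu)$ has zero total mass. Combined with 1-Lipschitzness this gives $|\varphi_\epsilon(y)| \le |y|$, a bound integrable against $\mu$, $\nu$, and $\mud$ since all three lie in $\Pc_1(\R^d)$. The family $\{\varphi_\epsilon\}$ is then equicontinuous and locally uniformly bounded, so Arzelà-Ascoli extracts a subsequence $\varphi_{\epsilon_k}$ converging locally uniformly to some 1-Lipschitz $\tilde\varphi$. I would next verify that $\tilde\varphi$ is itself a Kantorovich potential from $\mu$ to $\mud$: split $\int \varphi_{\epsilon_k}\, d(\mu_{\epsilon_k} - \mud) = \int \varphi_{\epsilon_k}\, d(\mu - \mud) + \epsilon_k \int \varphi_{\epsilon_k}\, d(\nu - \mu)$; the left-hand side equals $W_1(\mu_{\epsilon_k}, \mud)$ and converges to $W_1(\mu, \mud)$ by $W_1$-continuity (since $W_1(\mu_\epsilon,\mu) \to 0$), while the right-hand side converges to $\int \tilde\varphi\, d(\mu - \mud)$ by dominated convergence with the dominant $|y|$. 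Selecting $\varphi_\mu^{\mud} := \tilde\varphi$, the same DCT passage yields $\int \varphi_{\epsilon_k}\, d(\nu - \mu) \to \int \tilde\varphi\, d(\nu - \mu)$, collapsing the sandwich along the subsequence. Convexity of $J$ on $\Pc_1(\R^d)$ (verified in the appendix) makes $\epsilon \mapsto \frac{J(\mu_\epsilon)-J(\mu)}{\epsilon}$ monotone, so the full one-sided limit exists and agrees with the subsequential one.

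The main technical obstacle is the identification of $\tilde\varphi$ as a Kantorovich potential together with the justification of all limit passages: because Kantorovich potentials on unbounded domains need not be globally bounded, one cannot invoke bounded convergence, and the linear-growth control $|\varphi_\epsilon(y)| \le |y|$ combined with the $\Pc_1$ hypothesis on \emph{all} three measures is indispensable. A secondary subtlety is that the Kantorovich potential is not unique in general, so the argument in effect selects a specific representative of $\varphi_\mu^{\mud}$ (namely the Arzelà-Ascoli limit) for which the LFD identity holds; this is consistent with the LFD itself being unique only up to a $\mu$-dependent additive constant, which drops out of \eqref{Cond: LFD}.
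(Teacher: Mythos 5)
Your proof is correct and its skeleton --- the two-sided bound on the difference quotient obtained by testing the duality formula \eqref{Eq: W1 Duality Formula} with $\varphi_\mu^{\mud}$ (suboptimal for $\mu_\epsilon$) and $\varphi_\epsilon$ (suboptimal for $\mu$) --- is exactly the paper's argument. Where you diverge is the limit passage for the upper bound. The paper simply cites the stability theorem for Kantorovich potentials \citep[Theorem 1.52]{Santambrogio15} to get $\varphi_{\mu_\epsilon}^{\mud}\to\varphi_\mu^{\mud}$ uniformly on compacts and then passes to the limit in the integral. You instead reprove that stability from scratch (normalization at the origin, the linear-growth bound $|\varphi_\epsilon(y)|\le|y|$, Arzel\`a--Ascoli, identification of the limit as a Kantorovich potential), and you add the convexity/monotonicity observation to upgrade subsequential convergence of the difference quotient to a full one-sided limit. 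Your route is longer but buys two things the paper's citation glosses over: (i) the cited stability result is stated for compact domains, and on $\R^d$ uniform convergence on compacts alone does not justify $\int\varphi_{\epsilon_k}\,d(\nu-\mu)\to\int\tilde\varphi\,d(\nu-\mu)$ --- one genuinely needs the dominated-convergence step with the integrable majorant $|y|$ and the $\Pc_1$ hypothesis on all three measures, which you supply explicitly; (ii) you make the representative-selection issue visible rather than implicit.

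One residual wrinkle, shared with (and in fact inherited from) the paper's own proof: the Arzel\`a--Ascoli limit $\tilde\varphi$ is extracted for a fixed $\nu$, so a priori the representative you select could vary with $\nu$, whereas \eqref{Cond: LFD} requires a single function $\frac{\delta J}{\delta m}(\mu,\cdot)$ valid for all $\nu$ simultaneously. Since distinct Kantorovich potentials from $\mu$ to $\mud$ can differ by more than a constant off the supports of $\mu$ and $\mud$, the quantity $\int\varphi\,d(\nu-\mu)$ can genuinely depend on the representative. This is harmless whenever the potential is unique up to an additive constant, and it can otherwise be repaired (e.g., by a diagonal extraction over a countable dense family of $\nu$, or by selecting the maximal $1$-Lipschitz extension from $\supp\mu\cup\supp\mud$), but as written neither your argument nor the paper's fully closes this point.
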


The proof of Proposition~\ref{Prop: LFD of W1 distance} is relegated to Appendix~\ref{subsec:proof of Prop: LFD of W1 distance}. 
To the best of our knowledge, Proposition~\ref{Prop: LFD of W1 distance} is the first result that establishes a precise connection between ``Kantorovich potential'' in optimal transport and ``linear functional derivative'' in the mean field game literature. 

Thanks to Proposition~\ref{Prop: LFD of W1 distance}, ODE \eqref{first Y} now becomes 
\begin{equation}\label{Eq: ODE}
    dY_{t} = -\nabla \varphi_{\mu^{Y_{t}}}^{\mud}(Y_{t}) \, dt, \quad \mu^{Y_{0}} = \mu_{0} \in \Pc_{1}(\R^d).
\end{equation}
That is, the evolution of the ODE is determined jointly by a Kantorovich potential from the present distribution $\mu^{Y_t}$ to $\mud$ (i.e., the function $\varphi_{\mu^{Y_t}}^{\mud}(\cdot)$) and the actual realization of $Y_t$ (which is plugged into $\nabla\varphi_{\mu^{Y_t}}^{\mud}(\cdot)$).



\section{A Discretization of ODE \eqref{Eq: ODE}}\label{sec:discretization}
Given $\epsilon > 0$ and an initial random variable $Y_{0,\epsilon}=Y_0$ with a given law $\mu^{Y_{0}} = \mu_{0}\in\Pc_1(\R^d)$, we consider a new random variable defined by
\begin{equation*}
    Y_{1,\epsilon} := Y_{0,\epsilon} - \epsilon \nabla \varphi_{\mu^{Y_{0,\epsilon}}}^{\mud}(Y_{0,\epsilon}).
\end{equation*}
Note that this is the very first step, from time 0 to time $\epsilon$, in a forward Euler discretization of ODE \eqref{Eq: ODE}. Using the law of $Y_{1,\epsilon}$, denoted by $\mu^{Y_{1,\epsilon}}$, we can obtain a Kantorovich potential $\varphi_{\mu^{Y_{1,\epsilon}}}^{\mud}$ from the present distribution $\mu^{Y_{1,\epsilon}}$ at time $\epsilon$ to $\mud$. This allows us to perform another forward Euler update and get
   $ Y_{2,\epsilon} := Y_{1, \epsilon} - \epsilon \nabla \varphi_{\mu^{Y_{1,\epsilon}}}^{\mud}(Y_{1, \epsilon})$.
We may continue this procedure and obtain a sequence of random variables $\{Y_{n, \epsilon}\}_{n\in\N}$, with
\begin{equation} \label{Discrete Time W1 Transport Process}
    Y_{n, \epsilon} := Y_{n-1, \epsilon} - \epsilon \nabla \varphi_{\mu^{Y_{n-1,\epsilon}}}^{\mud}(Y_{n-1, \epsilon}),\quad\forall n\in\N. 
\end{equation}
This discretization recursively defines a sequence of measures $\{ \mu^{Y_{n-1,\epsilon}} \}_{n\in\N}$ in $\Pc_{1}(\R^d)$. 
A piecewise constant flow of measures $\mu_{\epsilon}: [0, \infty)\rightarrow \mathcal{P}_{1}(\R^d)$ can then be defined by
\begin{equation} \label{Def: Interpolating measure curve}
    \mu_{\epsilon}(t) := \mu^{Y_{n-1, \epsilon}} \quad \hbox{for}\ \ t \in [(n-1)\epsilon, n\epsilon),\ \  n\in\N.
\end{equation}
Our second main theoretic result establishes the convergence of $\mu_{\epsilon}$ as $\epsilon\to 0^+$. Its proof is relegated to Appendix~\ref{subsec:proof of convergence result}.


\begin{theorem}\label{Th: Convergence of interpolating measure curve}
 For any $\epsilon>0$, let $\mu_{\epsilon}:[0,\infty) \rightarrow \mathcal{P}_{1}(\mathbb{R}^{d})$ be defined as in \eqref{Def: Interpolating measure curve} and assume that $\mu_{\epsilon}(t) \ll \mathcal{L}^{d}$ for all $t \ge 0$. 
 Then, there exists a sequence $\{ \epsilon_{k} \}_{k\in\N}$ with $\epsilon_k\to 0^+$ 
 and a curve $\mu^{*}: [0,\infty) \rightarrow \mathcal{P}_{1}(\mathbb{R}^{d})$ such that
    \begin{equation*}
        \lim_{k \rightarrow \infty} W_{1}(\mu_{\epsilon_{k}}(t),\mu^{*}(t)) = 0\quad \forall t>0. 
    \end{equation*}
   Furthermore, $t\mapsto \mu^{*}(t)$ is uniformly continuous (in the $W_{1}$ sense) on compacts of $[0,\infty)$.
\end{theorem}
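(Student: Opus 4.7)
The plan is to combine a uniform Lipschitz-in-time estimate for the family $\{\mu_\epsilon\}_{\epsilon>0}$ with a tightness/uniform integrability argument, and then extract a subsequence via a diagonal Arzel\`a--Ascoli argument in the Polish metric space $(\mathcal{P}_1(\mathbb{R}^d), W_1)$.

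First, I would exploit the a priori bound from Remark~\ref{rem:nabla phi}: since $\varphi_{\mu^{Y_{n-1,\epsilon}}}^{\mud}$ is $1$-Lipschitz, $|\nabla \varphi_{\mu^{Y_{n-1,\epsilon}}}^{\mud}| \le 1$ $\mathcal{L}^d$-a.e., and the absolute continuity assumption $\mu_\epsilon(t)\ll\mathcal L^d$ ensures the gradient is defined at $Y_{n-1,\epsilon}$ almost surely. The recursion \eqref{Discrete Time W1 Transport Process} then yields $|Y_{n,\epsilon} - Y_{n-1,\epsilon}| \le \epsilon$ a.s. Using $(Y_{n-1,\epsilon}, Y_{n,\epsilon})$ as a coupling between $\mu^{Y_{n-1,\epsilon}}$ and $\mu^{Y_{n,\epsilon}}$ gives $W_1(\mu^{Y_{n-1,\epsilon}}, \mu^{Y_{n,\epsilon}}) \le \epsilon$, and telescoping produces the key equicontinuity bound
\[
W_1(\mu_\epsilon(s), \mu_\epsilon(t)) \le |t-s| + \epsilon, \qquad \forall s,t \ge 0.
\]

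Second, for each fixed $T>0$, I would establish pointwise relative $W_1$-compactness of $\{\mu_\epsilon(t): \epsilon \in (0,1],\ t\in[0,T]\}$. The same gradient bound iterated gives $|Y_{n,\epsilon}| \le |Y_0| + n\epsilon \le |Y_0| + T + 1$ for $t\in[0,T]$, so the family has uniformly bounded first moments and is tight. Moreover, since $Y_0 \in L^1$ (because $\mu_0 \in \Pc_1(\R^d)$), the estimate $\int_{|y|>R} |y|\, d\mu_\epsilon(t)(y) \le \mathbb{E}\bigl[(|Y_0|+T+1)\,\mathbf{1}_{\{|Y_0|>R-T-1\}}\bigr]$ shows uniform integrability of the family. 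Tightness plus uniform integrability of $y\mapsto|y|$ is precisely the criterion for relative compactness in $(\Pc_1(\R^d),W_1)$.

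Third, I would run a diagonal extraction: fix a countable dense set $D\subset[0,\infty)$ and, using the pointwise compactness at each $t\in D$, obtain a subsequence $\epsilon_k\to 0^+$ along which $\mu_{\epsilon_k}(t)$ converges in $W_1$ for every $t\in D$; call the limit $\mu^*(t)$. Passing to the limit in the equicontinuity bound gives $W_1(\mu^*(s),\mu^*(t)) \le |t-s|$ on $D$, so $\mu^*$ extends uniquely to a $1$-Lipschitz map on $[0,\infty)$ into $(\Pc_1(\R^d),W_1)$, delivering the asserted uniform continuity on compacts. For an arbitrary $t>0$, convergence $W_1(\mu_{\epsilon_k}(t),\mu^*(t))\to 0$ follows by choosing $t_j\in D$ close to $t$ and applying the triangle inequality
\[
W_1(\mu_{\epsilon_k}(t),\mu^*(t)) \le W_1(\mu_{\epsilon_k}(t),\mu_{\epsilon_k}(t_j)) + W_1(\mu_{\epsilon_k}(t_j),\mu^*(t_j)) + W_1(\mu^*(t_j),\mu^*(t)),
\]
then sending first $k\to\infty$ and then $t_j\to t$, invoking the two equicontinuity estimates.

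The hard part will be the last step, i.e., upgrading pointwise convergence on a dense set to pointwise convergence everywhere in a setting where the approximants $\mu_\epsilon$ are only piecewise constant rather than continuous. This is why the uniform modulus $|t-s|+\epsilon$ (with the $\epsilon$ correction absorbing the jumps) rather than the standard $|t-s|$ is essential. The other ingredients—the $1$-Lipschitz control on $\varphi_{\mu}^{\mud}$, the coupling argument for single-step $W_1$, and the uniform $L^1$ bound from $|Y_{n,\epsilon}|\le |Y_0|+n\epsilon$—are fairly routine consequences of Remark~\ref{rem:nabla phi}.
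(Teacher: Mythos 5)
Your proposal is correct and follows essentially the same route as the paper: the uniform bound $|\nabla\varphi_{\mu}^{\mud}|\le 1$ gives both the moment bound $|Y_{n-1,\epsilon}|\le|Y_0|+T$ (hence tightness and uniform integrability of first moments, i.e.\ precompactness in $(\Pc_1(\R^d),W_1)$) and the modulus $W_1(\mu_\epsilon(s),\mu_\epsilon(t))\le|t-s|+\epsilon$, after which a compactness-plus-equicontinuity extraction yields the limit curve. The only cosmetic difference is that you reprove the refined Arzel\`a--Ascoli step by hand (diagonal extraction over a countable dense set, Lipschitz extension, and the three-term triangle inequality), whereas the paper cites \citet[Proposition 3.3.1]{Ambrosio08} for exactly that step and then diagonalizes over $T$.
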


At first glance, one might suspect that Theorem~\ref{Th: Convergence of interpolating measure curve} is a straightforward extension of \citet[Theorem 5.2]{Huang24} from $\Pc_2(\R^d)$ to the larger space $\Pc_1(\R^d)$. In fact, proving Theorem~\ref{Th: Convergence of interpolating measure curve} requires completely different arguments. \citet[Theorem 5.2]{Huang24} is established by an interpolation argument: for any $p>1$, the most cost-efficient path to move one probability measure to another in $\Pc_p(\R^d)$, measured by the $W_p$ distance, can be interpolated recursively into smaller segments between intermediate measures, where each segment is most cost-efficient in itself \citep[Lemma 7.2.1]{Ambrosio08}. This allows a suitable ODE discretization to correspond to the smaller segments \citep[Proposition 5.3]{Huang24}, thereby admitting a well-defined limit (i.e., the whole most cost-efficient path). The interpolation result in \citet[Lemma 7.2.1]{Ambrosio08}, however, does not hold for $p=1$. Theorem~\ref{Th: Convergence of interpolating measure curve} instead relies on the uniform boundedness of the gradient of {\it any} Kantorovich potential (Remark~\ref{rem:nabla phi}), which is unique to the $p=1$ case. Such boundedness allows us to show that $\{\mu_{\epsilon}(t): \epsilon>0, t\in[0,T]\}$ in \eqref{Def: Interpolating measure curve} is compact in $\Pc_1(\R^d)$ for any $T>0$ and the curve $t\mapsto \mu_{\epsilon}(t)$ becomes equicontinuous as $\epsilon\to 0^+$, so that a refined Arzela-Ascoli argument can be applied to give the convergence of $\mu_\epsilon$; see Appendix~\ref{subsec:proof of convergence result}. 

\begin{remark}\label{rem:robustness}
Theorem~\ref{Th: Convergence of interpolating measure curve} is important for the numerical implementation: it asserts that our discretization scheme \eqref{Discrete Time W1 Transport Process} is stable for small time steps and there is a well-defined limit. In light of its proof in Appendix~\ref{subsec:proof of convergence result}, the established convergence actually holds in a more general scenario. If the discretization \eqref{Discrete Time W1 Transport Process} is modified to 
\begin{equation*} 
    Y_{n, \epsilon} := Y_{n-1, \epsilon} - \epsilon \nabla g_{n-1,\epsilon}(Y_{n-1, \epsilon}),\quad\forall n\in\N,
\end{equation*}
where each $g_{n-1,\epsilon}$ is a 1-Lipschitz function (but not necessarily the Kantorovich potential $\varphi_{\mu^{Y_{n-1,\epsilon}}}^{\mud}$), the same convergence result in Theorem~\ref{Th: Convergence of interpolating measure curve} remains true. This is because the arguments in Appendix~\ref{subsec:proof of convergence result} hinge on only the 1-Lipschitz continuity of $g_{n-1,\epsilon}$ (for $|\nabla g_{n-1,\epsilon}|$ to be bounded by 1), but not the specific form of $g_{n-1,\epsilon}$. That is, our discretization scheme \eqref{Discrete Time W1 Transport Process} is robust in the following sense: in actual computation, as long as the estimated $\varphi_{\mu^{Y_{n-1,\epsilon}}}^{\mud}$ is 1-Lipschitz (facilitated by the discriminator's regularization in \citet{Gulrajani17} and \citet{Petzka18}), the scheme remains stable for small time steps with a well-defined limit. 
\end{remark}

Thanks to the convergence result in Theorem~\ref{Th: Convergence of interpolating measure curve}, 
we propose an algorithm (called \textbf{W1-FE}) to simulate \eqref{Discrete Time W1 Transport Process}; see Algorithm~\ref{Alg: W1-FE}. We use two neural networks to carry out the simulation, one for the Kantorovich potential $\varphi:\R^d \rightarrow \mathbb{R}$ and the other for the generator $G_{\theta} : \R^\ell \rightarrow \R^d$, where $\mathbb{R}^{\ell}$ (with $\ell\in \mathbb{N}$) is the latent space where we sample priors. To compute $\varphi$, we can follow the discriminator training in any well-known WGAN algorithm, e.g., vanilla WGAN from \citet{Arjovsky17}, W1-GP from \citet{Gulrajani17}, or W1-LP from \citet{Petzka18}, to obtain an estimate of the Kantorovich potential from the distribution of samples generated by $G_{\theta}$ to that of the data, $\mud$. 
To allow such generality in Algorithm~\ref{Alg: W1-FE}, we simply denote the computation of $\varphi$ by \texttt{SimulatePhi}$(\theta)$ and treat it as a black box function. When we use the method of \citet{Petzka18} (or \citet{Gulrajani17}) to compute $\varphi$, Algorithm~\ref{Alg: W1-FE} will be referred to as \textbf{W1-FE-LP} (or \textbf{W1-FE-GP}). 

The generator $G_{\theta}$ is trained by explicitly following the (discretized) ODE \eqref{Discrete Time W1 Transport Process}. We start with a set of priors $\{ z_{i} \}$, produce a sample $y_{i}=G_\theta(z_i)$ of $\mu^{Y_{n, \epsilon}}$, and  use a forward Euler step to compute a sample $\zeta_{i}$ of $\mu^{Y_{n+1,\epsilon}}$. The generator's task is to learn how to produce samples indistinguishable from the points $\{ \zeta_{i}\}$---or more precisely, to learn the distribution $\mu^{Y_{n+1,\epsilon}}$, represented by the points $\{ \zeta_{i}\}$. To this end, we fix the points $\{\zeta_i\}$ and update the generator $G_\theta$ by descending the mean square error (MSE) between $\{G_\theta(z_i)\}$ and $\{\zeta_i\}$ up to $K\in\N$ times. Note that throughout the $K$ updates of $G_\theta$, the points $\{\zeta_i\}$ are kept unchanged. This sets us apart from the standard implementation of stochastic gradient descent (SGD), but for a good reason: as our goal is to learn the distribution represented by $\{\zeta_i\}$, it is important to keep $\{\zeta_i\}$ unchanged for the eventual $G_{\theta}$ to more accurately represent $\mu^{Y_{n+1,\epsilon}}$, such that the (discretized) ODE \eqref{Discrete Time W1 Transport Process} is more accurately simulated. 


How we update the generator $G_\theta$ corresponds to \textit{persistent training} in \citet{Fischetti18}, a technique that reuses the same minibatch for $K$ consecutive SGD iterations. Experiments in \citet{Fischetti18} show that using a \textit{persistency level} of five (i.e., taking $K=5$) achieves much faster convergence on the CIFAR-10 dataset \citep[Figure 1]{Fischetti18}. In our numerical examples (see Section~\ref{sec:examples}), we will also show that increasing the persistency level appropriately in W1-FE can markedly improve training performance. 


\begin{algorithm}[tb]
   \caption{W1-FE, our proposed algorithm.}
   \label{Alg: W1-FE}
\begin{algorithmic}
   \STATE {\bfseries Input:} measures 
   $\mu^{\bm{z}}$, $\mud$, batch sizes $m\in\N$, generator learning rate $\gamma_{g}>0$, time step $\epsilon>0$, persistency value $K\in\N$, function \texttt{SimulatePhi} to approximate Kantorovich potential, generator $G_{\theta}$ parameterized as a deep neural network.
   \FOR{Number of training epochs}
   \STATE $\varphi \gets \texttt{SimulatePhi}(\theta)$ 
   \STATE Sample a batch $(z_{1}, \cdots, z_{m})$ from $\mu^{\bm{z}}$
   \STATE Compute $y_{i} \gets G_{\theta}(z_{i})$
   \STATE Compute $\zeta_{i} \gets y_{i} - \epsilon \nabla \varphi(y_{i})$. 
        \FOR{$K$ generator updates}
            \STATE Update $\theta \gets \theta - \frac{\gamma_{g}}{m}\nabla_{\theta} \sum_{i} |\zeta_{i}- G_{\theta}(z_{i})|^{2}$.
        \ENDFOR
   \ENDFOR
\end{algorithmic}
\end{algorithm}

\subsection{A Comparison: W1-FE and WGAN} \label{Sec: W1-FE vs WGAN}
Our first finding is that W1-FE actually covers WGAN as a special case. 
For the case $K=1$ in Algorithm~\ref{Alg: W1-FE} (i.e., W1-FE), the generator update reduces to standard SGD without persistent training, which turns Algorithm~\ref{Alg: W1-FE} into a standard WGAN algorithm.

\begin{proposition} \label{Prop: W1-GANs simulate our process}
    The WGAN algorithms presented in \citet{Arjovsky17}, \citet{Gulrajani17}, \citet{Petzka18} are special cases of Algorithm~\ref{Alg: W1-FE} with $K=1$.
\end{proposition}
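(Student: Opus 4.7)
The plan is to set $K=1$ in Algorithm~\ref{Alg: W1-FE} and verify by a chain-rule calculation that the single generator update coincides with the generator step of each cited WGAN algorithm, up to absorbing the constant $2\epsilon$ into the effective learning rate. The discriminator step requires no separate argument: \texttt{SimulatePhi} is instantiated, by construction, with the very same critic training (vanilla WGAN, W1-GP, or W1-LP) that we want to identify with. So all the work lives in the generator step.

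Let $\theta_{0}$ denote the generator parameter at the start of the epoch. I would write $y_{i}:=G_{\theta_{0}}(z_{i})$ and $\zeta_{i}:=y_{i}-\epsilon\nabla\varphi(y_{i})$, both treated as constants by the algorithm once computed, and then differentiate
\[
L(\theta):=\frac{1}{m}\sum_{i=1}^{m}|\zeta_{i}-G_{\theta}(z_{i})|^{2}.
\]
Using $\zeta_{i}-G_{\theta_{0}}(z_{i})=-\epsilon\nabla\varphi(y_{i})$ and the chain rule $\nabla_{\theta}[\varphi(G_{\theta}(z_{i}))]=\nabla\varphi(G_{\theta}(z_{i}))^{\top}\nabla_{\theta}G_{\theta}(z_{i})$, a direct computation gives
\[
\nabla_{\theta}L(\theta)\bigg|_{\theta_{0}}=\frac{2\epsilon}{m}\,\nabla_{\theta}\sum_{i=1}^{m}\varphi(G_{\theta}(z_{i}))\bigg|_{\theta_{0}},
\]
so the $K=1$ update takes the explicit form
\[
\theta\gets\theta_{0}-\frac{2\epsilon\gamma_{g}}{m}\,\nabla_{\theta}\sum_{i=1}^{m}\varphi(G_{\theta}(z_{i}))\bigg|_{\theta_{0}}.
\]

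To conclude, I would match this against the generator step in \citet{Arjovsky17,Gulrajani17,Petzka18}, each of which descends $-\tfrac{1}{m}\sum_{i}f(G_{\theta}(z_{i}))$ with some learning rate $\alpha>0$, where $f$ is the trained critic. Since the WGAN critic maximizes $\int f\,d(\mud-\mu)$ while a Kantorovich potential in Definition~\ref{def:KP} maximizes $\int\varphi\,d(\mu-\mud)$, we have $f=-\varphi$ up to an additive constant, and the two updates then coincide after identifying the effective learning rate $\alpha=2\epsilon\gamma_{g}$. The \emph{only} subtlety is this sign bookkeeping between the WGAN critic convention and the Kantorovich potential convention in Definition~\ref{def:KP}; no analytic work beyond the chain rule is required, so there is no real obstacle to overcome.
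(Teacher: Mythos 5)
Your proposal is correct and follows essentially the same route as the paper's proof: the identical chain-rule computation showing that for $K=1$ the MSE gradient equals $\frac{2\epsilon}{m}\nabla_\theta\sum_i\varphi(G_\theta(z_i))$, with the factor $2\epsilon$ absorbed into the learning rate, and the discriminator step identified by instantiating \texttt{SimulatePhi} with the WGAN critic training. The only difference is that you make the sign convention between the WGAN critic and the Kantorovich potential of Definition~\ref{def:KP} explicit, whereas the paper handles it implicitly by declaring the output of \texttt{SimulatePhi} to be the estimated Kantorovich potential itself.
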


\begin{proof}
Take \texttt{SimulatePhi} in Algorithm~\ref{Alg: W1-FE} to be the discriminator update in an aforementioned WGAN algorithm, so that the produced $\varphi$ is exactly the estimated Kantorovich potential in the WGAN algorithm. 
Then, it suffices to show that the generator update in Algorithm~\ref{Alg: W1-FE}, with $K=1$, coincides with that in the WGAN algorithm. When $K = 1$,
\begin{align}\label{K=1 equivalence}
\begin{split}
\nabla_\theta \frac{1}{m} \sum_{i=1}^{m} |\zeta_{i} - &G_{\theta}(z_{i})|^{2}\\ &= -\frac{2}{m} \sum_{i = 1}^{m} (\zeta_{i} - G_{\theta}(z_{i})) \nabla_{\theta}G_{\theta}(z_{i})\\
&= \frac{2}{m} \sum_{i = 1}^{m} \epsilon \nabla \varphi(G_{\theta}(z_{i})) \nabla_{\theta}G_{\theta}(z_{i})\\
&=\frac{2 \epsilon}{m} \nabla_{\theta} \sum_{i=1}^{m} \varphi (G_{\theta}(z_{i})),
\end{split}
\end{align}
where the second equality stems from $\zeta_{i} = G_{\theta}(z_{i}) - \epsilon \nabla \varphi(G_{\theta}(z_{i}))$, due to the two lines above the generator update in Algorithm~\ref{Alg: W1-FE}. That is, the generator update in Algorithm~\ref{Alg: W1-FE} is $\theta \gets \theta - \gamma_{g} \frac{2 \epsilon}{m} \nabla_{\theta} \sum_{i=1}^{m} \varphi(G_{\theta}(z_{i}))$, the same as that in the WGAN algorithm with a learning rate $2\gamma_{g}\epsilon$.
\end{proof}

The above result is somewhat unexpected: after all, W1-FE builds upon our gradient-flow approach, distinct from the min-max game perspective that underlies WGAN. Proposition~\ref{Prop: W1-GANs simulate our process} shows that the two fundamentally different methods may coincide when the computation of the gradient flow, or the (discretized) ODE \eqref{Discrete Time W1 Transport Process}, is {\it crude}---in the sense that $\mu^{Y_{n+1,\epsilon}}$, the distribution along the ODE at the next time step, is less accurately approximated (under $K=1$). 

\begin{remark}\label{rem:W1-FE and WGAN}
From the proof of Proposition~\ref{Prop: W1-GANs simulate our process}, it is tempting to think that if one enforces persistent training also on the generator update in WGAN (i.e., performs the update
\begin{equation} \label{Eq: WGAN Update}
    \theta \leftarrow \theta - \frac{\gamma_{g}}{m} \nabla_{\theta} \sum_{i=1}^{m} \varphi(G_{\theta}(z_{i}))
\end{equation}
$K\in\N$ consecutive times with the same minibatch $\{ z_{i} \}$), 
WGAN will become our W1-FE. This is however not the case. When $G_\theta$ is updated for the second time in the last line of W1-FE, the second equality in \eqref{K=1 equivalence} no longer holds, as ``$\zeta_{i} = G_{\theta}(z_{i}) - \epsilon \nabla \varphi(G_{\theta}(z_{i}))$'' is true only when $G_\theta$ is obtained from the previous iteration and has not been further updated yet. Hence, starting from $K=2$, the connection between W1-FE and WGAN breaks down. That is, even when persistent training is included in WGAN, the generator updates in W1-FE and WGAN coincide only for $K = 1$, and may be quite different for $K > 1$. 
\end{remark}

Our second finding, as a follow-up to Remark~\ref{rem:W1-FE and WGAN}, is that persistent training actually hurts the performance of WGAN, whereas it can significantly enhance the performance of W1-FE (as mentioned in the paragraph above Section~\ref{Sec: W1-FE vs WGAN} and demonstrated in Section~\ref{sec:examples} below). To see this, recall the min-max formulation of WGAN, i.e., 
\begin{align*} 
    &\min_{\theta} W_{1}\big((G_{\theta})_{\#}\mu^{\bm{z}}, \mud\big)\notag\\
     =\  &\min_{\theta} \max_{||\varphi||_{\Lip}\leq1} \left\{ \mathbb{E}_{\bm{z}\sim \mu^{\bm{z}}} [\varphi(G_{\theta}(\bm{z}))] - \mathbb{E}_{\bm{x}\sim\mud}[\varphi(\bm{x})]  \right\},
\end{align*}
where $(G_{\theta})_{\#}\mu^{\bm{z}}$ is the probability measure on $\R^d$ induced by $G_\theta(\bm{z})$ with $\bm{z}\sim\mu^{\bm{z}}$ and the equality follows from the duality \eqref{Eq: W1 Duality Formula}; see \citet[Section 2.2]{Gulrajani17} for an equivalent min-max setup. 
 For any fixed $\theta$, by Definition~\ref{def:KP} and Remark~\ref{rem:nabla phi}, the inside maximization over $||\varphi||_{\Lip}\leq1$ admits a maximizer $\varphi_\theta$ (i.e., an optimal discriminator in response to the generator $G_\theta$). We therefore obtain
\begin{align} \label{min phi_theta}
    &\min_{\theta} W_{1}\big((G_{\theta})_{\#}\mu^{\bm{z}}, \mud\big)\notag\\
     =\  &\min_{\theta}  \left\{ \mathbb{E}_{\bm{z}\sim \mu^{\bm{z}}} [\varphi_\theta(G_{\theta}(\bm{z}))] - \mathbb{E}_{\bm{x}\sim\mud}[\varphi_\theta(\bm{x})]  \right\}.
\end{align}
Ideally, when updating $G_\theta$, one should follow the minimization \eqref{min phi_theta}, which takes into account the dependence on $\theta$ of the optimal discriminator $\varphi_\theta$. The generator update in WGAN, however, assumes that $\varphi$ is fixed 
and performs 
\begin{align} \label{min phi}
&\min_{\theta}  \left\{ \mathbb{E}_{\bm{z}\sim \mu^{\bm{z}}} [\varphi(G_{\theta}(\bm{z}))] -  \mathbb{E}_{\bm{x}\sim\mud}[\varphi(\bm{x})]\right\} \\
=\ &\min_\theta \mathbb{E}_{\bm{z}\sim \mu^{\bm{z}}} [\varphi(G_{\theta}(\bm{z}))],\notag
\end{align}
which results in the update rule \eqref{Eq: WGAN Update} for $\theta$ in WGAN. Following \eqref{min phi}, but not \eqref{min phi_theta}, inevitably prevents WGAN from accurately minimizing $W_{1}\big((G_{\theta})_{\#}\mu^{\bm{z}}, \mud\big)$---after all, it is \eqref{min phi_theta} 
that equals $\min_{\theta} W_{1}\big((G_{\theta})_{\#}\mu^{\bm{z}}, \mud\big)$ theoretically. 

This issue may be exacerbated by performing \eqref{Eq: WGAN Update} multiple times with the same minibatch $\{ z_{i} \}$ (i.e., enforcing persistent training in WGAN as indicated in Remark~\ref{rem:W1-FE and WGAN}). Indeed, in so doing, one may better approximate the value of \eqref{min phi}. But since the values of \eqref{min phi} and \eqref{min phi_theta} are in general distinct, getting closer to \eqref{min phi} may amount to moving further away from \eqref{min phi_theta}, thereby deviating further from $\min_{\theta} W_{1}\big((G_{\theta})_{\#}\mu^{\bm{z}}, \mud\big)$. As a numerical experiment in Section~\ref{sec:examples} shows, adding persistent training destabilizes WGAN and worsens training results; see Figure~\ref{fig: CIFAR-10 LP Quantitative Results}. 

Such an issue is averted in our ODE setup. Theoretically, we never consider \eqref{min phi} but approach the original problem $\min_{\theta} W_{1}\big((G_{\theta})_{\#}\mu^{\bm{z}}, \mud\big)$ head-on, by deriving the ``gradient'' of $\mu\mapsto W_1(\mu,\mud)$ in $\Pc_1(\R^d)$ and the corresponding gradient-decent ODE \eqref{Eq: ODE}. Algorithmically, 
the discriminator $\varphi$ gives rise to points $\{\zeta_i\}$ that follow the new distribution induced by the ODE at the next time step, and the generator $G_\theta$ serves to recover the points $\{\zeta_i\}$, so as to properly represent the new distribution. As explained in the second and third paragraphs below Algorithm~\ref{Alg: W1-FE}, persistent training can help $G_\theta$ better recover the points $\{\zeta_i\}$, such that the ODE is more closely followed. In other words, persistent training in W1-FE (i.e., Algorithm~\ref{Alg: W1-FE}) allows for more accurate simulations of ODE \eqref{Eq: ODE}, which by construction reduces $W_{1}\big((G_{\theta})_{\#}\mu^{\bm{z}}, \mud\big)$ via gradient descent in $\Pc_1(\R^d)$. On the other hand, persistent training in WGAN (as indicated in Remark~\ref{rem:W1-FE and WGAN}) may produce a more accurate solution to \eqref{min phi}, which however does not correspond to the minimization of $W_{1}\big((G_{\theta})_{\#}\mu^{\bm{z}}, \mud\big)$.

\section{Numerical Experiments}\label{sec:examples}
This section contains three training experiments with datasets from low to high dimensions. In each experiment, we carry out the training task using W1-FE-LP with diverse persistency levels $K=1,3,5,10$. The case $K=1$ can be viewed as the baseline, as Proposition~\ref{Prop: W1-GANs simulate our process} indicates that it is equivalent to the refined WGAN algorithm in \citet{Petzka18} (i.e., W1-LP), which is arguably one of the most well-performing and stable WGAN algorithms.

First, we consider learning a two-dimensional mixture of Gaussians arranged on a circle from an initially given Gaussian distribution \citep{Metz17}. Figure~\ref{fig: 2D Qualitative Results} shows the qualitative evolution of the models, while Figure~\ref{fig: 2D Quantitative Results} presents the actual $W_1$ losses. In Figure~\ref{fig: 2D Quantitative Results}, W1-FE-LP with $K=3$ and $K=5$ converge to a similar loss level as the baseline $K=1$ case (i.e., W1-LP), but achieves it much faster than W1-LP in both number of epochs\footnote{One ``training epoch'' refers to learning one Euler step in \eqref{Discrete Time W1 Transport Process}, i.e., one iteration of the loop in Algorithm~\ref{Alg: W1-FE}.} and wallclock time. These gains are partially lost with $K=10$, possibly due to overfitting. 
The computation in Figures~\ref{fig: 2D Qualitative Results} and \ref{fig: 2D Quantitative Results} takes $10$ discriminator updates per training epoch,  $\gamma_{g}  = 10^{-4}$, minibatches of size $m=512$, and uses a three-layer perceptron for the generator and discriminator, where each hidden layer contains $128$ neurons. All neural networks are trained using the Adam stochastic gradient update rule \citep{Kingma17}. We let $\epsilon = 1$, for $\gamma_{g}$ is already small and thus controls any possible overshooting from backpropagation.

\begin{figure}[h]
\centering
\includegraphics[width=8cm]{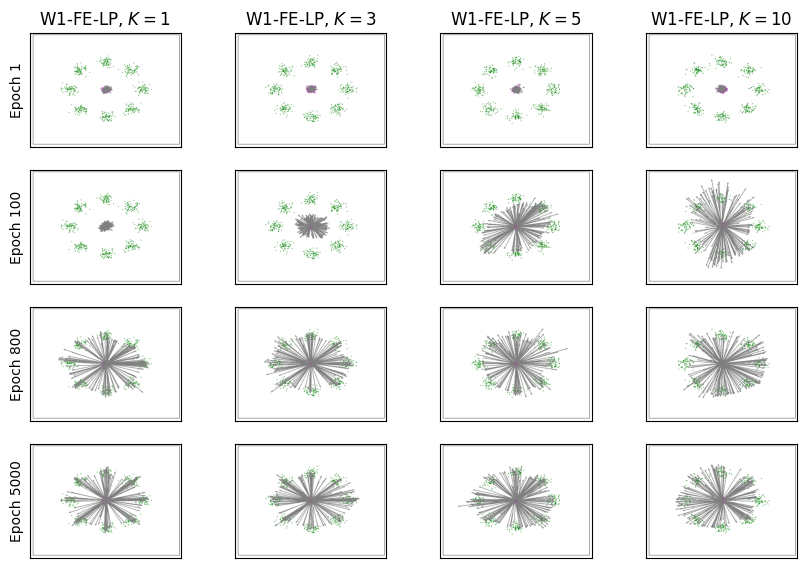}
\caption{\label{fig: 2D Qualitative Results} Qualitative evolution of learning process. A sample from the target distribution is given in green, a sample from the initial distribution is in magenta, and the transport rays by the generator are given in the grey arrows. The generated samples lie at the head of each grey arrow.}
\end{figure}

\begin{figure}[h]
\centering
\includegraphics[width=8cm]{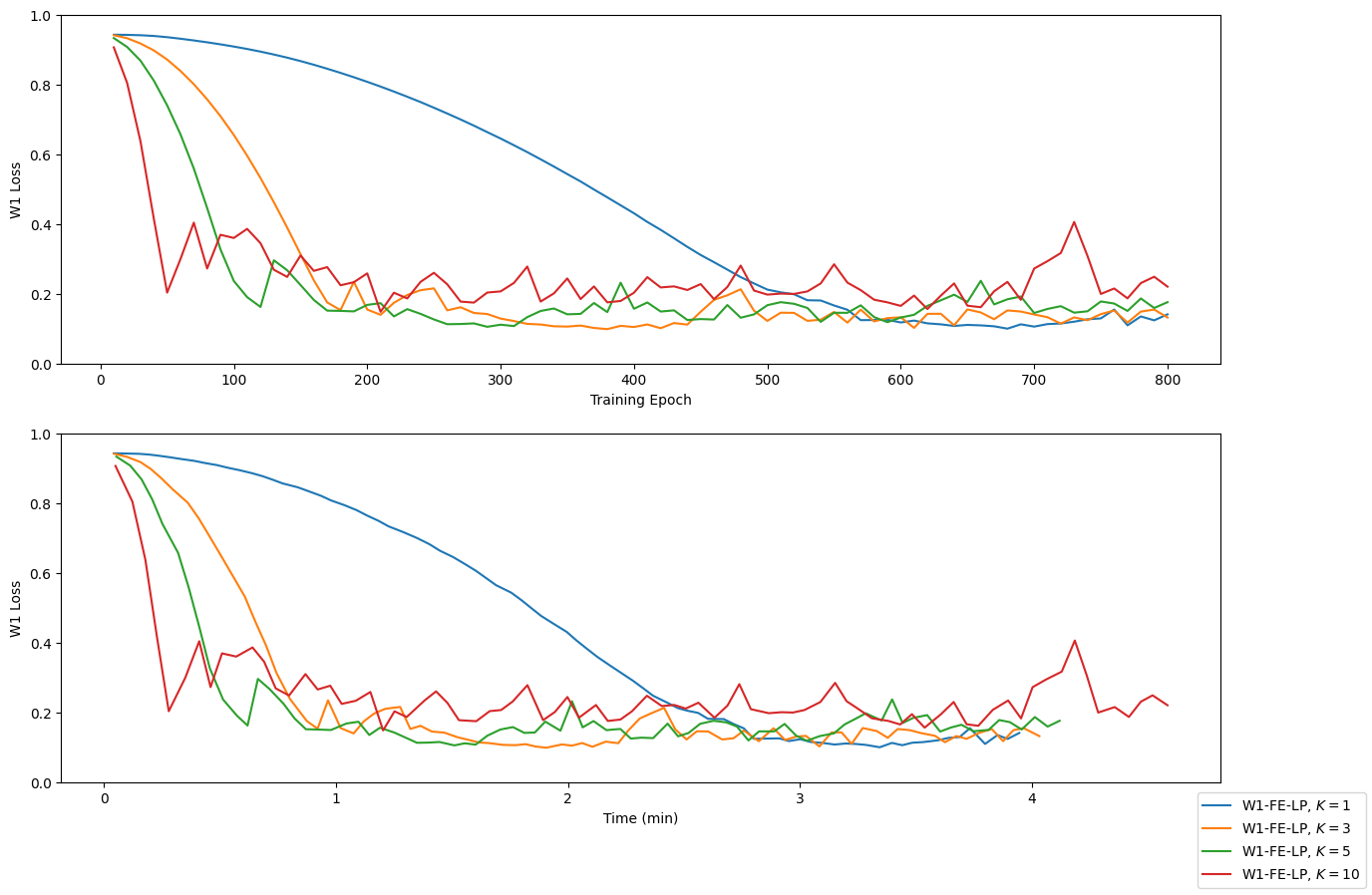}
\caption{\label{fig: 2D Quantitative Results} $W_1$ loss of W1-FE-LP with persistency levels $K=1,3,5,10$ against training epoch (top) and wallclock time (bottom), respectively.}
\end{figure}

Next, we consider domain adaptation from the USPS dataset \citep{Hull94} to the MNIST dataset \citep{Deng12} and evaluate the performance of our algorithms every 100 training epochs using a 1-nearest neighbor (1-NN) classifier. This is the same performance metric as in \citet{Seguy18}, although it was used there only once at the end of training. Figure~\ref{fig: High-D Quantitative Results} displays the results. With the persistency level $K = 3$, W1-FE-LP converges significantly faster and consistently achieves a higher accuracy rate than the baseline case $K=1$ (i.e., W1-LP). Indeed, it takes W1-LP 6000 epochs to attain its ultimate accuracy rate, which is achieved by W1-FE-LP with $K = 3$ by epoch 2000; W1-FE-LP with $K = 3$ continues to improve after epoch 2000, yielding the best accuracy rate among all the models. Raising persistency level to $K = 5$ and $K=10$ further accelerates the convergence before epoch 2000, but the ultimate accuracy rate achieved worsens slightly, which might result from overfitting.
The computation in Figure~\ref{fig: High-D Quantitative Results} takes $\gamma_{g} = 10^{-4}$, $\epsilon = 1$, minibatches of size $m=64$, and $5$ discriminator updates per training epoch. 

\begin{figure}[h]
\centering
\includegraphics[width=8cm]{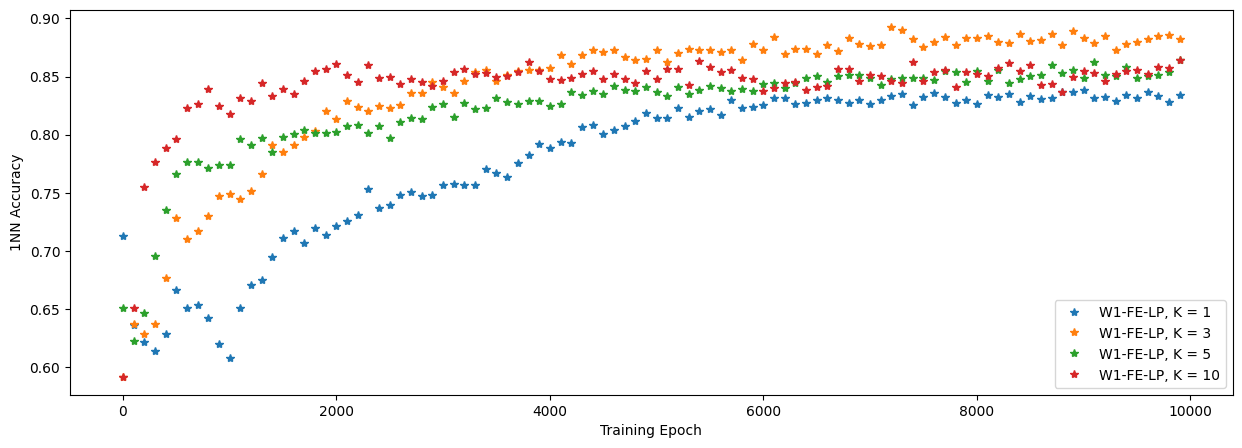}
\caption{\label{fig: High-D Quantitative Results} $1$-NN classifier accuracy against training epoch for domain adaptation from USPS to MNIST datasets.}
\end{figure}

Finally, we train our algorithms on the CIFAR-10 dataset. The prior distribution (i.e., the input $z$ into the generator in Algorithm~\ref{Alg: W1-FE}) is a $100$-dimensional standard Gaussian and we transform it through a multi-layer convolutional neural network into an image of dimension $3\times 32 \times 32$. 
The performance is evaluated by the Fr\'{e}chet inception distance (FID) \citep{Heusel18}, a common criterion for training quality of high-dimensional images. The results are displayed in Figure~\ref{fig: CIFAR-10 LP Quantitative Results} (top), where the FID is calculated using a pre-trained ResNet18 model on CIFAR-10, publicly available on the Github repository \citet{Phan_GH}. With a larger persistency level $K$, W1-FE-LP converges faster and consistently achieves a lower FID than the baseline model $K=1$ (i.e., W1-LP). In particular, it takes W1-LP 30000 epochs to achieve an FID of around 8. This same level of quality is achieved by W1-FE-LP with $K=10$ by epoch 10000. A visual inspection of Figure~\ref{fig: CIFAR-10 LP Qualitative Results} shows that images generated by epoch $500$ are far clearer under $K=10$ than under smaller $K$ values. Figure~\ref{fig: CIFAR-10 LP Quantitative Results} (top) also shows that W1-FE-LP with $K=5$ achieves the smallest FID by epoch $30000$. Raising the persistency level to $K=10$ further accelerates the convergence before epoch 20000, but the ultimate FID achieved worsens slightly, possibly due to overfitting.

We also perform experiments on CIFAR-10 under W1-LP with persistent training included, as indicated by Remark~\ref{rem:W1-FE and WGAN}. Figure~\ref{fig: CIFAR-10 LP Quantitative Results} (bottom) shows that the FID diverges as  the persistency level $K\in\N$ increases and the lowest FID is achieved under $K=1$ (i.e., no persistent training). This confirms the  discussion below Remark~\ref{rem:W1-FE and WGAN}, which asserts that persistent training can hurt the performance of WGAN. A visual inspection of samples (Section \ref{subsec:WGAN persistency}) also shows the worsened performance qualitatively. 
The computation in Figures \ref{fig: CIFAR-10 LP Quantitative Results} and \ref{fig: CIFAR-10 LP Qualitative Results} takes $\gamma_{g} = 10^{-4}$, time step $\epsilon = 1$, minibatches of size $m=64$, and $5$ discriminator updates per training epoch.

\begin{figure}[h]
\centering
\includegraphics[width=8cm]{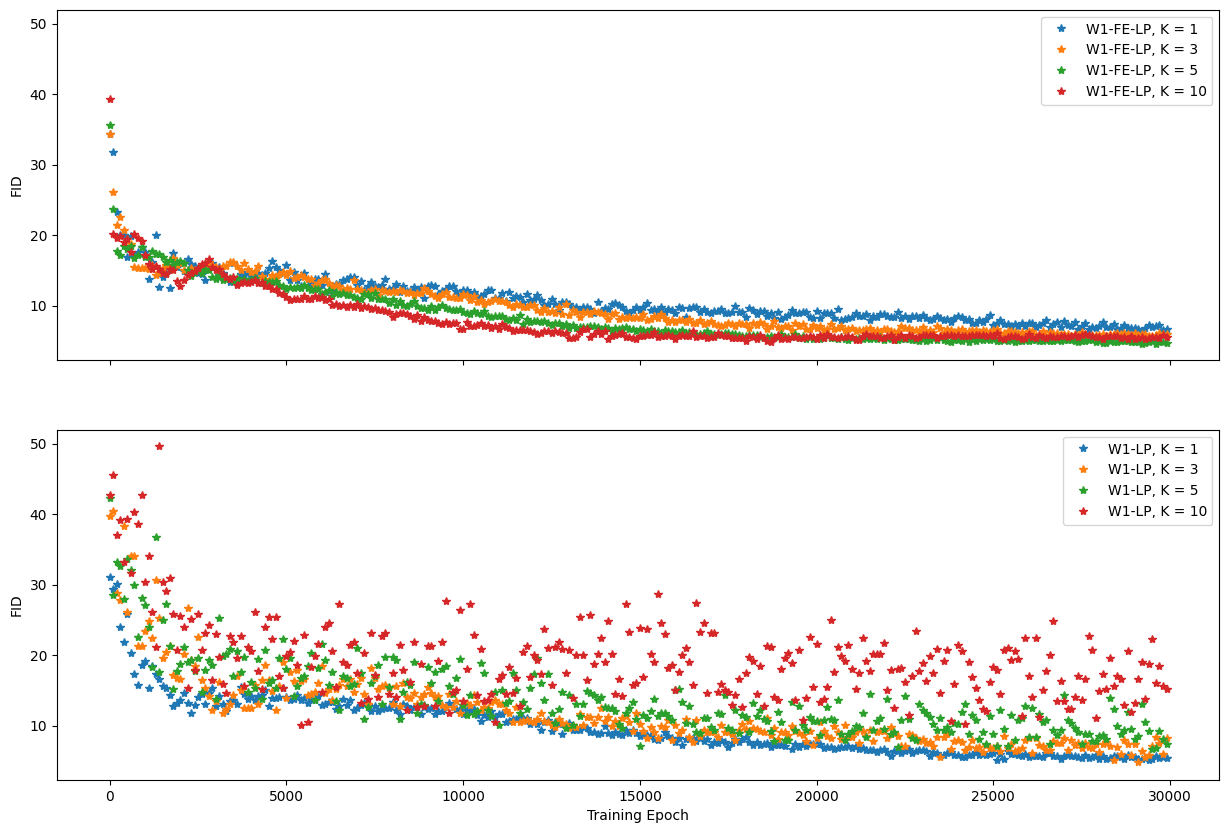}
\caption{\label{fig: CIFAR-10 LP Quantitative Results} FID against training epoch for various W1-FE-LP (top) and W1-LP (bottom) models on generating CIFAR-10 images. This demonstrates how persistent training may destabilize the training procedure in other WGANs.}
\end{figure}

\begin{figure}[h]
\centering
\includegraphics[width=8cm]{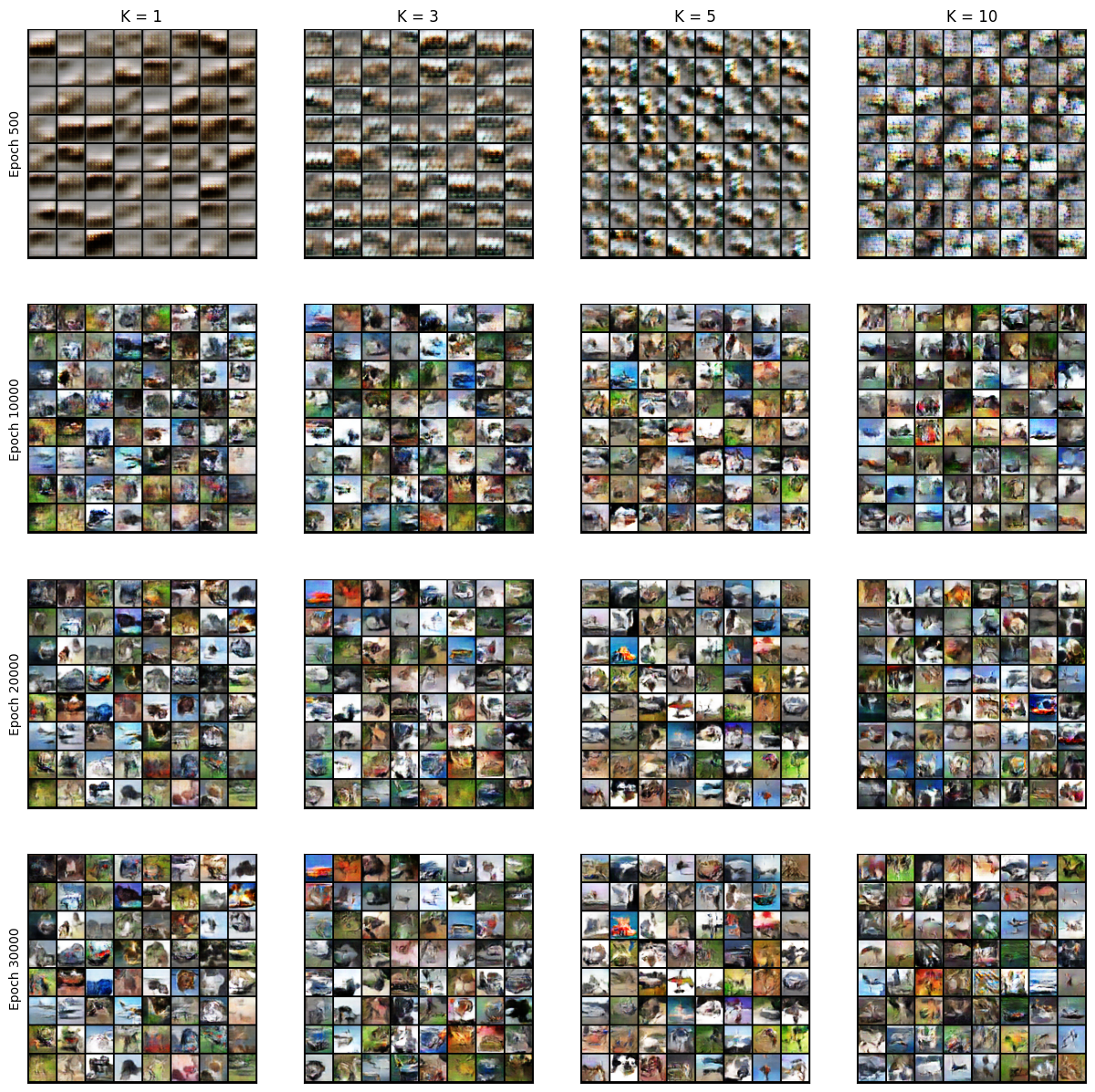}
\caption{\label{fig: CIFAR-10 LP Qualitative Results} Uncurated samples from various W1-FE-LP models across training.}
\end{figure}


\section{Limitations}\label{sec:limitations}
While we have made important progress on the theoretical development, several questions remain open. Recall that Algorithm~\ref{Alg: W1-FE} builds upon \eqref{Discrete Time W1 Transport Process}, a discretization of the gradient-flow ODE \eqref{Eq: ODE}. While this discretization has a well-defined limit $\mu^*(t)$ in continuous time (Theorem~\ref{Th: Convergence of interpolating measure curve}), whether $\mu^*(t)$ truly solves ODE \eqref{Eq: ODE} is left unanswered. It is also unclear if $\mu^*(t)$ will ultimately converge to the data distribution $\mud$, although it is intuitively expected by our ``gradient descent'' idea. To fill these gaps, one wishes to show that (i) there exists a (unique) solution $Y$ to ODE \eqref{Eq: ODE}, (ii) the law of $Y_t$ coincides with $\mu^*$, i.e., $\mu^{Y_t}=\mu^*(t)$ for all $t\ge 0$, and (iii) $\mu^{Y_t}$ ultimately converges to $\mud$, i.e., $W_1(\mu^{Y_t},\mud)\to 0$ as $t\to\infty$. The challenge here is twofold. As the coefficient $(\mu,x)\mapsto \nabla \varphi_{\mu}^{\mud}(x)$ of \eqref{Eq: ODE} is not continuous in general, standard existence results for distribution-dependent (stochastic) differential equations (i.e., McKean-Vlasov equations) cannot be easily applied. Also, when analyzing the flow of measures $\{\mu^{Y_t}\}_{t\ge 0}$ in $\Pc_1(\R^d)$ through the continuity equation (or, Fokker-Planck equation) associated with \eqref{Eq: ODE}, standard results in \citet{Ambrosio08} are not applicable, as they cover the case $\Pc_p(\R^d)$ for all $p>1$ but excludes  $\Pc_1(\R^d)$. 


Numerically, while we showed that persistent training can markedly improve training results in several experiments, it is not without restrictions. Recall that SGD is performed $K\in\N$ consecutive times with the same minibatch $\{\zeta_i\}$ in Algorithm~\ref{Alg: W1-FE}, with $\zeta_{i} = G_{\theta}(z_{i}) - \epsilon \nabla \varphi(G_{\theta}(z_{i}))$. There are two issues one has to confront. First, any inaccuracy in the estimation of the Kantorovich potential $\varphi$ will be amplified by persistent training. As a larger $K\in\N$ demands that our algorithm more closely fits the data points $\{\zeta_i\}$, even when $\{\zeta_i\}$ are of low quality due to the inaccuracy of $\varphi$, the issue of ``garbage in, garbage out'' will be exacerbated. Second, even if $\varphi$ is perfectly estimated, such that $\{\zeta_i\}$ are of high quality, an excessive $K\in\N$ will lead to overfitting. 

The first issue can be mitigated by better estimating the Kantorovich potential $\varphi$. Indeed, experiments in Section~\ref{sec:examples} are run using W1-FE-LP, not W1-FE-GP, because the former is known to estimate $\varphi$ more accurately than the latter \citep{Petzka18}; recall the distinction between the two algorithms in the paragraph below Remark~\ref{rem:robustness}. In Appendix~\ref{subsec:GP}, we run the first experiment in Section~\ref{sec:examples} again using W1-FE-GP and compare the results with those under W1-FE-LP previously presented. It shows that raising persistency levels results in significantly more severe instability under W1-FE-GP than under W1-FE-LP. On the other hand, to mitigate overfitting, we suggest finding a suitable persistency level through careful numerical investigation. The experiments in Section~\ref{sec:examples} all indicate a threshold of $K\in\N$ beyond which the performance starts to deteriorate (i.e., $K=5$, $K=3$, and $K=5$ in the first, second, and third experiments, respectively). Taking $K\in\N$ to be at such a threshold (but not beyond it) can likely balance the benefits of persistent training against overfitting.

\section{Conclusion}\label{sec:conclusion}
By performing ``gradient descent'' in the space $\Pc_{1}(\R^d)$, we introduce a distribution-dependent ODE for the purpose of generative modeling. A forward Euler discretization of the ODE converges to a curve of probability measures, suggesting that a numerical implementation of the discretization is stable for small time steps. This inspires a class of new algorithms (called W1-FE) that naturally involves persistent training. If we (artificially) turn off persistent training, our algorithms recover WGAN algorithms. By increasing the level of persistent training suitably, our algorithms outperform WGAN algorithms in numerical examples. 


\section{Impact Statement}
This paper presents work whose goal is to advance the field of Machine Learning. There are many potential societal consequences of our work, none of which we feel must be specifically highlighted here.

\bibliographystyle{icml2025}
\bibliography{bibliography.bib}

\appendix
\section{Theoretical Results}
\subsection{Convexity of $W_{1}(\cdot,\mu_{d})$}\label{subsec:convexity of J}
\begin{proposition}
The function $J:\Pc_1(\R^d)\to\R$ in \eqref{J} is convex. That is, for any $\mu, \nu \in \mathcal{P}_{1}(\R^d)$ and $\lambda \in (0,1)$, we have $J((1-\lambda)\mu + \lambda \nu) \leq \lambda J(\mu) + (1-\lambda) J(\nu)$.
\end{proposition}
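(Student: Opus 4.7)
The plan is to invoke the Kantorovich-Rubinstein duality \eqref{Eq: W1 Duality Formula} and exploit the fact that a supremum of affine functions is convex. Concretely, I would write
\[
J(\mu) = W_1(\mu,\mud) = \sup_{\|\varphi\|_{\Lip}\le 1} \left\{ \int_{\R^d} \varphi\, d\mu - \int_{\R^d} \varphi\, d\mud \right\},
\]
and observe that for every fixed 1-Lipschitz $\varphi$, the functional $\mu \mapsto \int \varphi\, d\mu - \int \varphi\, d\mud$ is affine in $\mu$, since integration against a fixed (bounded-growth) test function is linear in the measure. Thus $J$ is a pointwise supremum of affine functionals on $\Pc_1(\R^d)$, and convexity is immediate.

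To spell it out for the statement as given, I would fix $\mu,\nu\in\Pc_1(\R^d)$ and $\lambda\in(0,1)$, set $\mu_\lambda := (1-\lambda)\mu + \lambda\nu$, and for any 1-Lipschitz $\varphi$ compute
\[
\int \varphi\, d(\mu_\lambda - \mud) = (1-\lambda)\int \varphi\, d(\mu-\mud) + \lambda \int \varphi\, d(\nu - \mud).
\]
Taking the supremum over $\|\varphi\|_{\Lip}\le 1$ on both sides and using the elementary inequality $\sup(f+g)\le \sup f + \sup g$ (applied to the two summands that are each controlled by $J(\mu)$ and $J(\nu)$ respectively) yields $J(\mu_\lambda) \le (1-\lambda) J(\mu) + \lambda J(\nu)$.

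There is essentially no obstacle here: the only small point to check is that $\mu_\lambda$ indeed lies in $\Pc_1(\R^d)$, which follows because $\int |y|\, d\mu_\lambda(y) = (1-\lambda)\int |y|\,d\mu(y) + \lambda \int |y|\,d\nu(y) < \infty$, so all quantities are well-defined and finite. The proof is a one-liner modulo invoking \eqref{Eq: W1 Duality Formula}.
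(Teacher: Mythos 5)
Your proof is correct and is essentially the same argument as the paper's: both exploit the Kantorovich--Rubinstein duality \eqref{Eq: W1 Duality Formula} to write $J$ as a supremum of functionals affine in the measure, split the integral against $(1-\lambda)\mu+\lambda\nu-\mud$ linearly, and bound each summand by the corresponding supremum (the paper just phrases this via the attained maximizer $\varphi_\lambda$ at the convex combination rather than the abstract ``sup of affine functionals is convex'' principle). Note that the coefficients in the displayed inequality of the statement are swapped relative to what both proofs actually deliver, namely $J((1-\lambda)\mu+\lambda\nu)\le(1-\lambda)J(\mu)+\lambda J(\nu)$; this is immaterial since $\mu$ and $\nu$ are arbitrary.
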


\begin{proof}
For any $\mu, \nu \in \mathcal{P}_{1}(\R^d)$ and $\lambda \in (0,1)$, let $\varphi_{\lambda}$ denote a Kantorovich potential from $(1-\lambda)\mu + \lambda \nu\in\Pc_1(\R^d)$ to $\mud$. By \eqref{J}, \eqref{Eq: W1 Duality Formula}, and Definition~\ref{def:KP},  
    \begin{equation*}
        \begin{aligned}
            J((1-\lambda)\mu& + \lambda \nu) = W_1((1-\lambda)\mu + \lambda \nu,\mud) \\
            &= \int_{\mathbb{R}^{d}} \varphi_{\lambda} \, d((1-\lambda)\mu + \lambda \nu) - \int_{\mathbb{R}^{d}} \varphi_{\lambda} \, d\mu_{d}\\
            &= (1-\lambda)\int_{\mathbb{R}^{d}} \varphi_{\lambda} \, d(\mu- \mu_{d}) + \lambda \int_{\mathbb{R}^{d}} \varphi_{\lambda} \, d(\nu - \mu_{d}) \\
            &\leq (1-\lambda) \sup_{||\varphi||_{\operatorname{Lip}} \leq 1} \left \{ \int_{\mathbb{R}^{d}} \varphi \, d(\mu-\mu_{d}) \right \} \\
            &\qquad+ \lambda \sup_{||\varphi||_{\operatorname{Lip}} \leq 1} \left \{ \int_{\mathbb{R}^{d}}  \varphi \, d(\nu - \mu_{d}) \right\} \\
            &= (1-\lambda) J(\mu) + \lambda J(\nu),
        \end{aligned}
    \end{equation*}
    where the last equality follows again from \eqref{J} and \eqref{Eq: W1 Duality Formula}. 
\end{proof}

\begin{remark}
In most cases, the inequality in the proof above is strict, as it is in general unlikely that $\varphi_{\lambda}$ also attains both of the two suprema.
\end{remark}

\subsection{Proof of Proposition~\ref{Prop: LFD of W1 distance}}\label{subsec:proof of Prop: LFD of W1 distance}
\begin{proof}
Fix $\mu,\nu\in\Pc_1(\R^d)$. For any $\epsilon \in (0,1)$, note that $\mu + \epsilon(\nu-\mu) = (1-\epsilon)\mu+\epsilon\nu$ remains in $\Pc_1(\R^d)$. By \eqref{J} and Definition~\ref{def:KP}, 
\begin{align}
J(\mu) &= W_1(\mu,\mud) =  \int_{\R^d} \varphi_{\mu}^{\mu_{d}} \, d(\mu - \mud),\label{equ1}
\end{align}
and
\begin{equation}
    \begin{aligned}
        J(\mu + \epsilon(\nu-\mu)) &= W_1(\mu + \epsilon(\nu-\mu),\mud) \\
        &=  \int_{\R^d} \varphi_{\mu+\epsilon(\nu-\mu)}^{\mud} \, d(\mu + \epsilon(\nu-\mu) - \mud).\label{equ2}
    \end{aligned}
\end{equation}
On the other hand, by \eqref{J}, the duality formula \eqref{Eq: W1 Duality Formula}, and the fact that $\varphi_{\mu}^{\mud},  \varphi_{\mu+\epsilon(\nu-\mu)}^{\mud}:\R^d\to\R$ are $1$-Lipschitz functions, we obtain the inequalities
\begin{equation}
    \begin{aligned}
        J(\mu) &= W_1(\mu,\mud) \\
        &\ge   \int_{\R^d} \varphi_{\mu+\epsilon(\nu-\mu)}^{\mud} \, d(\mu - \mud),\label{inequ1}
    \end{aligned}
\end{equation}
and 
\begin{equation}
    \begin{aligned}
        J(\mu + \epsilon(\nu-\mu))&= W_1(\mu + \epsilon(\nu-\mu),\mud) \\
        &\geq \int_{\R^d} \varphi_{\mu}^{\mud} \, d(\mu + \epsilon(\nu-\mu) - \mud).\label{inequ2}
    \end{aligned}
\end{equation}
It follows from \eqref{inequ2} and \eqref{equ1} that
    \begin{equation*}
        \begin{aligned}
            &J(\mu + \epsilon(\nu-\mu)) - J(\mu) \\
            &\qquad\geq \int_{\R^d} \varphi_{\mu}^{\mud} \, d(\mu + \epsilon(\nu-\mu) - \mud)  \\
            &\qquad \qquad - \int_{\R^d} \varphi_{\mu}^{\mud} \, d(\mu - \mud) \\
            &\qquad = \epsilon \int_{\R^d}  \varphi_{\mu}^{\mud} \, d(\nu - \mu), 
        \end{aligned}
  \end{equation*}
while \eqref{equ2} and \eqref{inequ1} imply
    \begin{equation*}
        \begin{aligned}
            &J(\mu + \epsilon(\nu-\mu)) -J(\mu) \\
            &\leq\int_{\R^d} \varphi_{\mu + \epsilon(\nu-\mu)}^{\mud} \, d(\mu + \epsilon(\nu-\mu) - \mud) \\
            &\qquad - \int_{\R^d} \varphi_{\mu + \epsilon(\nu-\mu)}^{\mud} \, d(\mu - \mu_{d})\\
            &= \epsilon \int_{\R^d} \varphi_{\mu + \epsilon(\nu-\mu)}^{\mud} \, d(\nu - \mu).
        \end{aligned}
    \end{equation*}
Putting the above two inequalities together, we see that
    \begin{equation*}
        \begin{aligned}
            &\int_{\R^d} \varphi_{\mu}^{\mud} \, d(\nu - \mu)\\
            &\qquad \leq \frac{J(\mu+\epsilon(\nu-\mu)) - J(\mu)}{\epsilon} \\
            &\qquad \leq \int_{\R^d} \varphi_{\mu+\epsilon(\nu-\mu)}^{\mud} \, d(\nu-\mu).
        \end{aligned}
    \end{equation*}
As $\epsilon \rightarrow 0^{+}$, since $\varphi_{\mu+\epsilon(\nu-\mu)}^{\mu_{d}}$ converges uniformly to $\varphi_{\mu}^{\mu_{d}}$ on compacts of $\R^d$ \citep[Theorem 1.52]{Santambrogio15}, the right-hand side above tends to $\int_{\R^d} \varphi_{\mu}^{\mud} \, d(\nu - \mu)$. This then implies
    \begin{equation*}
        \lim_{\epsilon \rightarrow 0^{+}} \frac{J(\mu + \epsilon(\nu - \mu)) - J(\mu)}{\epsilon} = \int_{\R^d} \varphi_{\mu}^{\mud} \, d(\nu - \mu),
    \end{equation*}
i.e., $\varphi_{\mu}^{\mud}$ is a linear functional derivative of $J$. 
\end{proof}

\subsection{A Refined Arzela-Ascoli Result}
The following is a transcription of \citet[Proposition 3.3.1]{Ambrosio08} in our specific setting, where we consider the metric space $\mathcal{P}_{1}(\mathbb{R}^{d})$ with the natural topology induced by the $W_1$ distance.

\begin{proposition} \label{Prop: Arzela-Ascoli}
Fix $T>0$ and let $K \subseteq \mathcal{P}_{1}(\mathbb{R}^{d})$ be compact in $\mathcal{P}_{1}(\mathbb{R}^{d})$ under the topology induced by the $W_1$ distance. For any sequence $\{g_n\}_{n\in\N}$ of curves $g_{n}:[0,T] \rightarrow \mathcal{P}_{1}(\mathbb{R}^{d})$ such that
        \begin{align}
            &g_{n}(t) \in K, \quad \forall n \in \mathbb{N},\ t \in [0,T],\label{in K}\\
            &\limsup_{n \rightarrow \infty} W_1(g_{n}(s), g_{n}(t)) \leq |s-t|, \, \forall s,t \in [0,T],\label{equicontinuous}
        \end{align}
there exist an increasing subsequence $k \rightarrow n(k)$ and a continuous $g:[0,T] \rightarrow \mathcal{P}_{1}(\mathbb{R}^{d})$ such that
    \begin{equation}
        W_{1}(g_{n(k)}(t), g(t)) \rightarrow 0 \quad \forall t \in [0,T].
    \end{equation}
\end{proposition}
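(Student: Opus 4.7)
The plan is to follow the skeleton of the classical Arzel\`a--Ascoli theorem, but adapted to handle the asymptotic (rather than uniform) equicontinuity in \eqref{equicontinuous}. Concretely, I would (i) use the $W_1$-compactness of $K$ together with a diagonal argument to extract a subsequence that converges pointwise on a countable dense set $D\subseteq[0,T]$; (ii) show that the pointwise limit on $D$ is $1$-Lipschitz, hence extends uniquely to a continuous $g\colon[0,T]\to K$; and (iii) upgrade pointwise convergence from $D$ to all of $[0,T]$ using \eqref{equicontinuous} inherited along the chosen subsequence.

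For (i), pick $D=(\mathbb{Q}\cap[0,T])\cup\{0,T\}$ and enumerate $D=\{t_j\}_{j\in\N}$. Since $g_n(t_1)\in K$ for every $n$ and $(K,W_1)$ is compact, some subsequence converges; inductively extracting on $t_2,t_3,\dots$ and taking the diagonal yields an increasing map $k\mapsto n(k)$ and points $g_\infty(t_j)\in K$ with $W_1(g_{n(k)}(t_j),g_\infty(t_j))\to 0$ for every $j\in\N$. For (ii), fix $s,t\in D$ and apply the triangle inequality
\begin{equation*}
W_1(g_\infty(s),g_\infty(t))\leq W_1(g_\infty(s),g_{n(k)}(s))+W_1(g_{n(k)}(s),g_{n(k)}(t))+W_1(g_{n(k)}(t),g_\infty(t)).
\end{equation*}
Letting $k\to\infty$, the outer terms vanish and \eqref{equicontinuous} (which is inherited by $\{g_{n(k)}\}$) bounds the middle $\limsup$ by $|s-t|$, so $g_\infty\colon D\to K$ is $1$-Lipschitz. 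Because $(K,W_1)$ is compact and therefore complete, $g_\infty$ admits a unique $1$-Lipschitz (in particular continuous) extension $g\colon[0,T]\to K$.

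For (iii), fix $t\in[0,T]$ and $\delta>0$, and choose $s\in D$ with $|s-t|<\delta$. The triangle inequality gives
\begin{equation*}
W_1(g_{n(k)}(t),g(t))\leq W_1(g_{n(k)}(t),g_{n(k)}(s))+W_1(g_{n(k)}(s),g(s))+W_1(g(s),g(t)).
\end{equation*}
Taking $\limsup_{k\to\infty}$: the middle term tends to $0$ by step (i), the last term is $\leq|s-t|<\delta$ by $1$-Lipschitz continuity of $g$, and the first is bounded by $\limsup_n W_1(g_n(s),g_n(t))\leq|s-t|<\delta$ via \eqref{equicontinuous}. Hence $\limsup_k W_1(g_{n(k)}(t),g(t))\leq 2\delta$, and letting $\delta\to 0^+$ delivers the claimed pointwise convergence.

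The principal obstacle, distinguishing this from the textbook Arzel\`a--Ascoli setup, is that \eqref{equicontinuous} is only an asymptotic bound and does not directly furnish uniform equicontinuity of the family $\{g_{n(k)}\}$ or Cauchyness of $\{g_{n(k)}(t)\}_k$ at an arbitrary $t$. The workaround is structural: because \eqref{equicontinuous} passes verbatim to any subsequence, I may apply it \emph{after} the diagonal extraction in step (iii), turning what would have been a uniform control into a $\limsup$ control that nevertheless suffices once the middle term $W_1(g_{n(k)}(s),g(s))$ has been killed using $s\in D$. The remaining work is bookkeeping via the triangle inequality and completeness of $K$.
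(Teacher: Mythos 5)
Your proof is correct. Note, however, that the paper itself does not prove this proposition: it is presented as a transcription of Proposition 3.3.1 of Ambrosio, Gigli, and Savar\'e (2008) into the setting of $(\mathcal{P}_1(\mathbb{R}^d),W_1)$, and the proof is deferred entirely to that reference. Your argument is a self-contained reconstruction of the standard proof underlying the cited result, specialized to the case at hand: diagonal extraction over a countable dense set $D\subseteq[0,T]$ using compactness of $(K,W_1)$, the observation that the asymptotic equicontinuity \eqref{equicontinuous} passes to subsequences and forces the pointwise limit to be $1$-Lipschitz on $D$, extension to $[0,T]$ by completeness of $K$ (compact metric spaces are complete), and a three-term triangle inequality to upgrade convergence from $D$ to every $t\in[0,T]$, using that \eqref{equicontinuous} holds for all pairs $s,t\in[0,T]$ and not just pairs in $D$. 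Each of these steps is sound, including the only delicate point, namely that the $\limsup$ along the extracted subsequence is dominated by the $\limsup$ along the full sequence, so \eqref{equicontinuous} can be invoked after extraction. What the paper's citation buys is generality: the result in Ambrosio--Gigli--Savar\'e allows $K$ to be sequentially compact in a possibly weaker Hausdorff topology $\sigma$, with $d$ only $\sigma$-lower-semicontinuous and a general symmetric modulus $\omega(s,t)$ in place of $|s-t|$, which requires producing the limit curve in the $\sigma$-topology first and arguing its $d$-continuity separately. What your specialization buys is transparency: since here the topology and the metric coincide and $K$ is metrically compact, the classical dense-subset Arzel\`a--Ascoli argument closes directly, making the appendix self-contained rather than reliant on an external, considerably more abstract statement.
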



\subsection{Proof of Theorem~\ref{Th: Convergence of interpolating measure curve}}\label{subsec:proof of convergence result}
\begin{proof}
Let $(\Omega, \mathcal F, \P)$ be the underlying probability space that supports all the random variables $\{Y_{n-1,\epsilon}:n\in\N, \epsilon>0\}$, defined as in \eqref{Discrete Time W1 Transport Process}. 
Fix any $T>0$. We will show that $\{\mu_\epsilon(t):\epsilon> 0, t\in[0,T]\}$ fulfills \eqref{in K} and \eqref{equicontinuous}. 
For any fixed $t \in [0,T]$, there exists $n \in \mathbb{N}$ such that $t \in [(n-1)\epsilon, n\epsilon)$ and $\mu_{\epsilon}(t) = \mu^{Y_{n-1, \epsilon}}$. By \eqref{Discrete Time W1 Transport Process}, the random variable $Y_{n-1, \epsilon}$ takes the form 
\begin{eqnarray} \label{Eq: Decomposition of Y_n,eps}
        Y_{n-1,\epsilon} = Y_{0} - \epsilon \sum_{i = 0}^{n-2} \nabla  \varphi_{\mu^{Y_{i,\epsilon}}}^{\mu_{d}}(Y_{i, \epsilon}).
    \end{eqnarray}
As $|\nabla \varphi_\mu^\nu(x)|\le 1$ $\mathcal L^d$-a.e.\ on $\R^d$ for all $\mu,\nu\in\Pc_1(\R^d)$ (Remark~\ref{rem:nabla phi}), this implies 
\begin{equation}\label{bdd by Y_0+T}
|Y_{n-1,\epsilon}| \leq |Y_{0}| + (n-1) \epsilon\le  |Y_{0}| +t \le  |Y_{0}| +T\quad \hbox{a.s.,} 
\end{equation}
where the second inequality is due to $t \in [(n-1)\epsilon, n\epsilon)$. It follows that for all $t\in[0,T]$, 
 \begin{eqnarray}
        \begin{aligned}\label{E[|y|] bdd}
            \sup_{\epsilon >0} \int_{\mathbb{R}^{d}} |y| \, d\mu_{\epsilon}(t) &= \sup_{\epsilon > 0} \int_{\mathbb{R}^{d}} |y| \, d\mu^{Y_{n-1, \epsilon}} \\ & = \sup_{\epsilon> 0} \mathbb{E}^{\mathbb{P}}[|Y_{n-1,\epsilon}|] \\
            &\leq \mathbb{E}^\P [|Y_{0}|] + T< \infty,
        \end{aligned}
    \end{eqnarray}
By \eqref{E[|y|] bdd} and the fact that the function $\phi(y) := |y|$, $y \in \mathbb{R}^{d}$, has compact sublevels (i.e., the set $\{ y : |y| \leq c \}$  is compact in $\mathbb{R}^{d}$ for any $c \geq 0$), \citet[Remark 5.1.5]{Ambrosio08} asserts that the collection of measures $\{ \mu_{\epsilon}(t): \epsilon>0,t\in[0,T]\}$ is tight (i.e., precompact under the topology of weak convergence). To further prove that this collection of measures is precompact in $\Pc_1(\R^d)$, it suffices to show that the measures have uniformly integrable first moments, in view of \citet[Proposition 7.1.5]{Ambrosio08}. That is, we need to show that
\[
 \lim_{k\to\infty}   \sup_{\epsilon>0,t\in[0,T]}\int_{\mathbb{R}^{d} \setminus B_{k}(0)} |y| \, d\mu_{\epsilon}(t) =0,
\]
where $B_k(0)$ denotes the open ball centered at $0\in\R^d$ with radius $k>0$. For any fixed $t\in[0,T]$, by the same arguments above \eqref{Eq: Decomposition of Y_n,eps}, 
    \begin{align*}
        \int_{\mathbb{R}^{d} \setminus B_{k}(0)} |y| \, d\mu_{\epsilon}(t) &= \mathbb{E}^\P \left[ |Y_{n-1,\epsilon}|\ \mathbb{I}_{\mathbb{R}^{d} \setminus B_{k}(0)}(Y_{n-1,\epsilon}) \right]\\
         &= \E^\P\left[|Y_{n-1,\epsilon}(\omega)|\ \mathbb{I}_{\{|Y_{n-1,\epsilon}(\omega)| \ge k\}}(\omega)\right]\\
         &\le \E^\P\big[|Y_{0}(\omega)+T|\ \mathbb{I}_{\{|Y_{0}(\omega)+T|\ge k\}}(\omega)\big]
    \end{align*}
    where $\mathbb I$ denotes an indicator function and the inequality follows from \eqref{bdd by Y_0+T}. Hence,

    \begin{equation}
        \begin{aligned}
            &\sup_{\epsilon>0,t\in[0,T]}\int_{\mathbb{R}^{d} \setminus B_{k}(0)} |y| \, d\mu_{\epsilon}(t) \\
            &\qquad\leq \E^\P\big[|Y_{0}(\omega)+T|\ \mathbb{I}_{\{|Y_{0}(\omega)+T|\ge k\}}(\omega)\big] \\
            &\qquad \to 0, \; \hbox{as}\ k\to\infty,
        \end{aligned}
    \end{equation}
 where the convergence follows from $Y_0\in L^1(\P)$, thanks to $\mu^{Y_0}=\mu_0\in\Pc_1(\R^d)$.  We therefore conclude that $\{ \mu_{\epsilon}(t): \epsilon>0,t\in[0,T]\}$ is precompact  in $\Pc_1(\R^d)$ and thus fulfills \eqref{in K}. 

Next, consider any $s, t \in [0,T]$ with $s \neq t$. Without loss of generality, assume $s < t$. For any fixed $\epsilon >0$, there exist   $j, k \in \mathbb{N}$ with $j\le k$ such that
    \begin{eqnarray}\label{j and k}
        \begin{aligned}
            (j-1)\epsilon \le s < j \epsilon\ \ \hbox{and}\ \ \mu_{\epsilon}(s) = \mu^{Y_{j-1, \epsilon}}; \\
            (k-1)\epsilon \le t < k \epsilon \ \ \hbox{and}\ \ \mu_{\epsilon}(t) = \mu^{Y_{k-1, \epsilon}}.      
        \end{aligned}
    \end{eqnarray}
By \eqref{Discrete Time W1 Transport Process}, we have
       \begin{equation}
        Y_{k-1, \epsilon} = Y_{j-1, \epsilon} - \epsilon \sum_{i=1}^{k-j} \nabla \varphi_{\mu^{Y_{i-1,\epsilon}}}^{\mud}(Y_{i, \epsilon}).
    \end{equation}
It follows that
    \begin{eqnarray}
        \begin{aligned}
           W_{1}(\mu_{\epsilon}(s), \mu_{\epsilon}(t)) &= W_{1}(\mu^{Y_{j-1, \epsilon}}, \mu^{Y_{k-1, \epsilon}})\\
           &\leq \mathbb{E}^{\mathbb{P}} [|Y_{k-1, \epsilon} - Y_{j-1, \epsilon}|] \\
           &= \mathbb{E}^{\mathbb{P}} \left[ \left| \epsilon \sum_{i=1}^{k-j} \nabla \varphi_{\mu^{Y_{n-1,\epsilon}}}^{\mu_{d}}(Y_{i, \epsilon}) \right| \right]
           \\ 
           & \leq \epsilon (k-j) < |t - s| + \epsilon, 
        \end{aligned}
    \end{eqnarray}
where the second inequality follows again from $|\nabla \varphi_\mu^\nu(x)|\le 1$ $\mathcal L^d$-a.e.\ on $\R^d$ for all $\mu,\nu\in\Pc_1(\R^d)$ (Remark~\ref{rem:nabla phi}) and the third inequality is due to \eqref{j and k}. This immediately yields
\begin{equation}
        \limsup_{\epsilon \rightarrow 0}  W_{1}(\mu_{\epsilon}(s), \mu_{\epsilon}(t)) \leq |s-t|, \quad \forall s,t \in [0,T], 
    \end{equation}
 i.e., $\{ \mu_{\epsilon}(t): \epsilon>0,t\in[0,T]\}$ satisfies \eqref{equicontinuous}. 
    
Now, we can apply Proposition~\ref{Prop: Arzela-Ascoli} to obtain a subsequence $\{ \epsilon_{k} \}$ and a continuous curve $\mu^{*}_T(t):[0,T]\to\Pc_1(\R^d)$ such that $W_{1}(\mu_{\epsilon_{k}}(t),\mu^{*}_T(t))\to 0$ for all $t\in [0, T]$. By a diagonal argument, we can construct a continuous $\mu^*:[0,\infty)\to\Pc_1(\R^d)$ such that $W_{1}(\mu_{\epsilon_{k}}(t),\mu^{*}(t))\to 0$ for all $t\ge 0$, possibly along a further subsequence.   
\end{proof}

\section{More Experimental Results}
\subsection{Persistency on W1-FE-GP}\label{subsec:GP}
We run the first experiment in Section~\ref{sec:examples} again using W1-FE-GP. The results, along with those under W1-FE-LP in the main text, are shown in Figure~\ref{fig: W1-GP 2D Quantitative Results}. As we can see, raising persistency levels results in significantly more severe instability under W1-FE-GP than under W1-FE-LP. 

\begin{figure}[h]
\centering
\includegraphics[width=8cm]{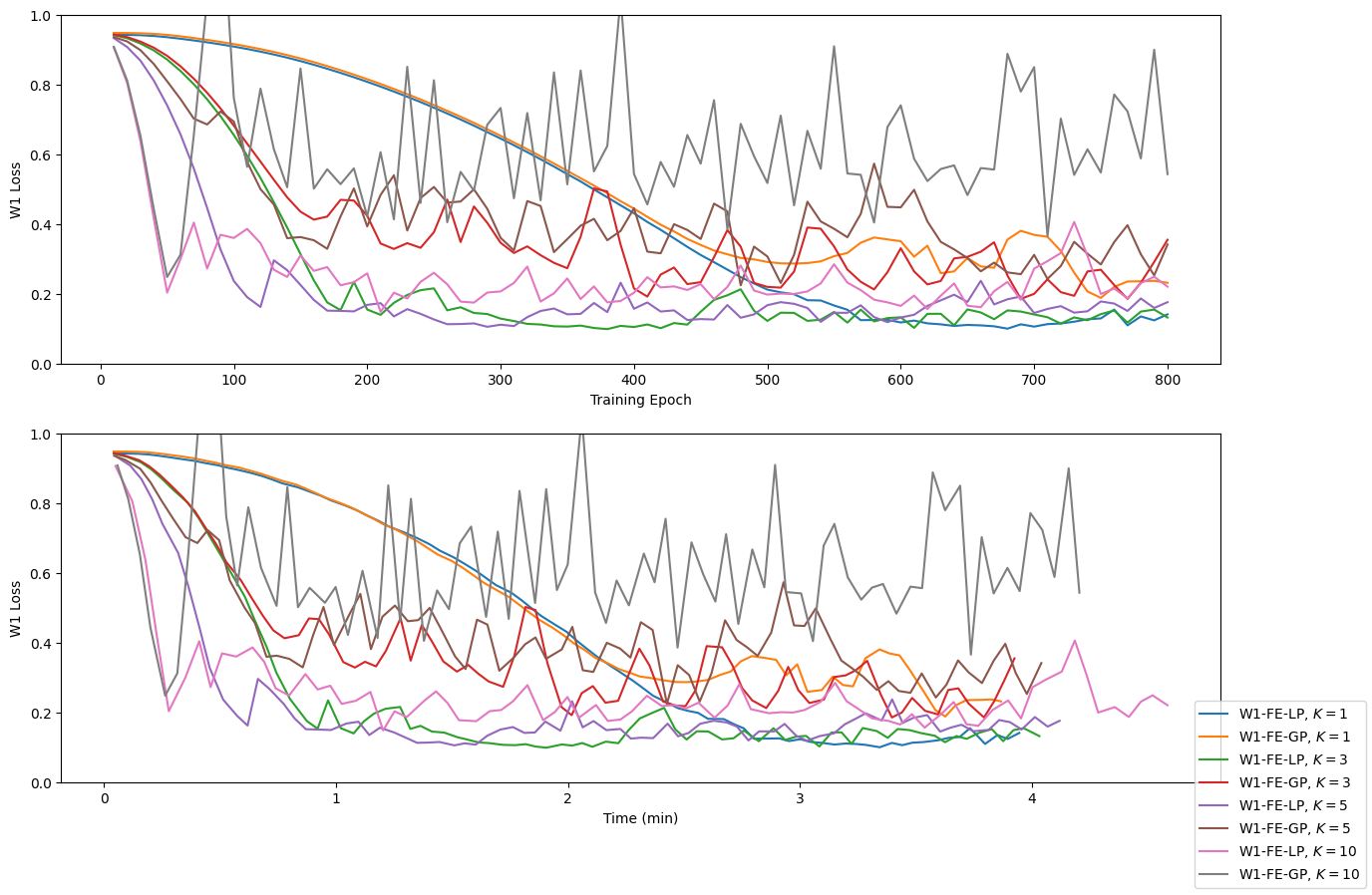}
\caption{\label{fig: W1-GP 2D Quantitative Results} 
$W_1$ loss of W1-FE-GP and W1-FE-LP with persistency levels $K=1,3,5,10$ against training epoch (left) and wallclock time (right), respectively.}
\end{figure}

\subsection{Persistency on WGANs}\label{subsec:WGAN persistency}
Persistent training on WGANs was shown to diverge on the CIFAR-10 dataset. We include a qualitative evolution of the training process to show how bad persistency is for WGANs.

\begin{figure}
    \centering
    \includegraphics[width=0.5\linewidth]{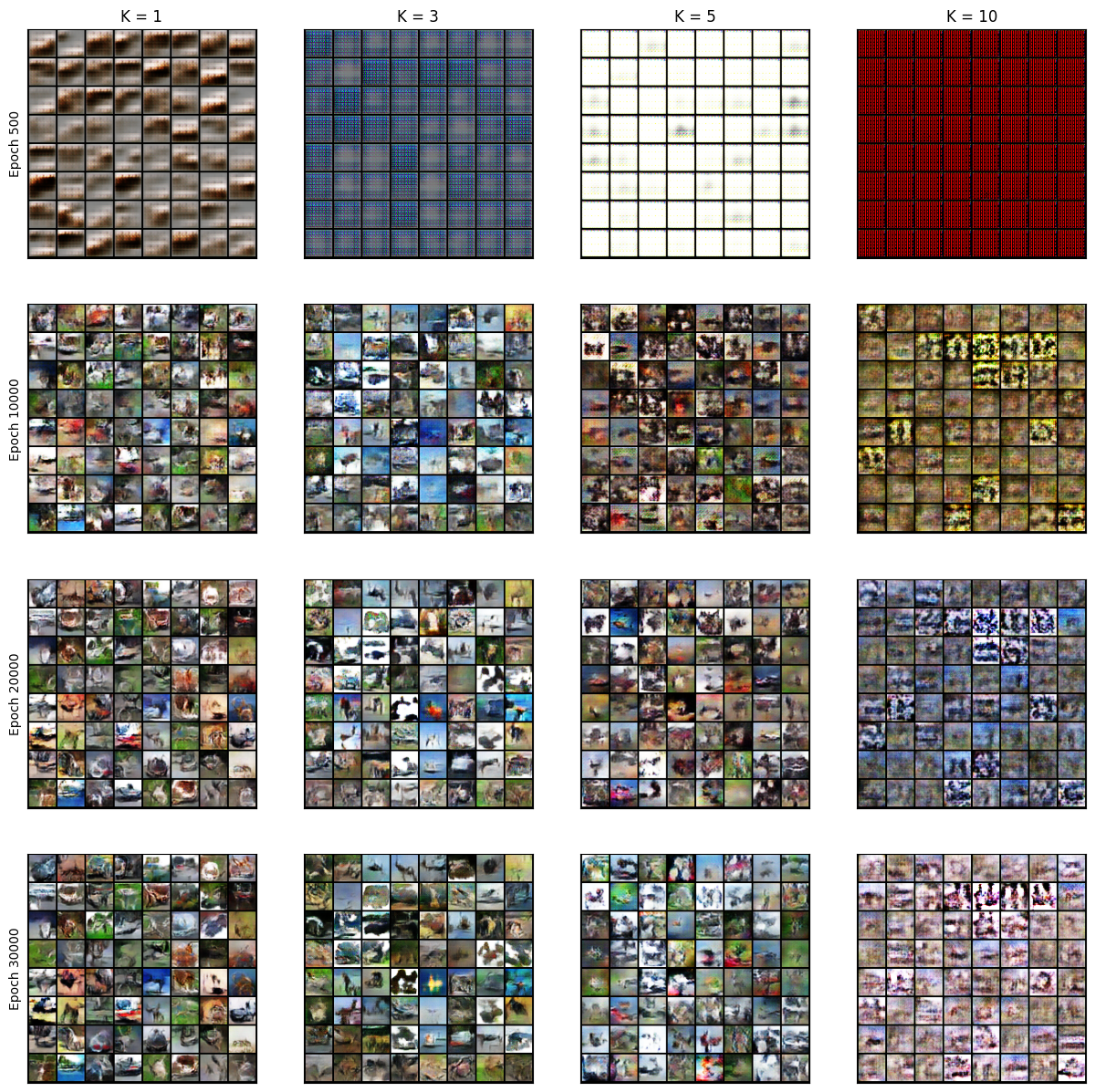}
    \caption{Uncurated samples from various persistent WGAN models across training. Increasing persistency generally results in worse performance.}
    \label{fig: Persistent training for WGANs.}
\end{figure}

\section{Using the code}
We built off of the software package developed for use in \cite{Leygonie19}. While we made substantial changes to the package for our own purposes, we do acknowledge that the package built by \cite{Leygonie19} made it substantially easier for us to implement our algorithm. The usage is almost identical to the original package's usage.

We recommend storing the code as either a zipped file or pulling directly from the GitHub repository. We also recommend using a Google Colab notebook as the virtual environment. Once the software package is loaded in the appropriate folder, one may reproduce the low dimensional experiments by running \texttt{main.py} inside \texttt{exp\textunderscore 2d}. The high dimensional experiments may be reproduced by running \texttt{main.py} inside \texttt{exp\textunderscore da}. 

If one uses Google Colab to run the experiments, then the default environment provided by the Google Colab Jupyter notebook in addition to the package Python Optimal Transport (POT) is required to run the software. To reproduce the plots, one needs the package \texttt{tensorboard}.

\end{document}